\newtheoremstyle{exampstyle}
{1pt}{1pt}{\itshape}{}{\bfseries}{.}{.5em}{}
\newtheorem{theorem}{Theorem}
\newtheorem{lemma}[theorem]{Lemma}
\newcommand\lref[1]{\hyperref[#1]{Lemma~\ref*{#1}}}
\newcommand{\dif}[0]{\mathrm{d}}
\newcommand{\KL}[2]{\mathbb{D}[#1||#2]}
\titlespacing\section{0pt}{2pt plus 2pt minus 2pt}{0pt plus 2pt minus 2pt}
\titlespacing\subsection{0pt}{2pt plus 2pt minus 2pt}{0pt plus 2pt minus 2pt}
\setlist{nosep}
\algrenewcommand\algorithmicrequire{\textbf{Input:}}
\algrenewcommand\algorithmicensure{\textbf{Output:}}
\newcommand{\ent}[1]{\mathbb{H}[#1]}
\title{Active Inference is a Subtype of Variational Inference}
\author{%
Wouter W. L. Nuijten $^{1,2}$ \qquad Mykola Lukashchuk$^1$ \thanks{Equal contribution between authors.}\\
$^1$Department of Electrical Engineering, Eindhoven University of Technology, the Netherlands\\
$^2$Lazy Dynamics, Utrecht, the Netherlands\\
\texttt{w.w.l.nuijten@tue.nl}\\
}
\begin{document}

\maketitle
\begin{abstract}
    Automated decision-making under uncertainty requires balancing exploitation and exploration. Classical methods treat these separately using heuristics, while Active Inference unifies them through Expected Free Energy (EFE) minimization. However, EFE minimization is computationally expensive, limiting scalability. We build on recent theory recasting EFE minimization as variational inference, formally unifying it with Planning-as-Inference and showing the epistemic drive as a unique entropic contribution. Our main contribution is a novel message-passing scheme for this unified objective, enabling scalable Active Inference in factored-state MDPs and overcoming high-dimensional planning intractability.

% Automated decision-making under uncertainty requires agents to balance instrumental goals (exploitation) with the drive to reduce uncertainty (exploration). Classical planning and control methods often treat these as separate objectives, relying on heuristics. Active Inference offers a principled alternative by framing planning as the minimization of the Expected Free Energy (EFE), a unified objective that inherently integrates epistemic and goal-directed behavior. However, practical EFE minimization remains computationally demanding, limiting the scalability of Active Inference. This paper builds on the recent theoretical insight that EFE minimization can be rigorously recast as a standard variational inference task on an augmented generative model. We formally unify this result with the Planning-as-Inference paradigm, showing that the epistemic drive corresponds to a specific, unique entropic contribution within the variational objective. Our primary contribution is the derivation of a novel, scalable message-passing scheme to minimize this unified EFE-based objective. This scheme provides a practical method for implementing Active Inference in factored-state Markov decision processes, circumventing the intractability typical of high-dimensional planning and paving the way for scalable, theoretically grounded epistemic agents. \bdv{This can be shorter if you need more space later.}
\end{abstract}

\section{Introduction}

Automated decision-making under uncertainty is a central, long-standing challenge across control theory and artificial intelligence. When the system dynamics are well-known and deterministic, classical methods like Optimal Control \citep{bellman_theory_1954} and Model Predictive Control (MPC) \citep{cutler_dynamic_1979} provide principled frameworks for determining optimal actions. These approaches, focused primarily on minimizing a predefined cost function, have been elegantly unified under the Planning as Inference (PAI) paradigm \citep{toussaint_robot_2009, attias_planning_2003}, showing that control can be cast as a variational inference problem on a factor graph \citep{todorov_general_2008,levine_reinforcement_2018}.

However, the real-world challenge lies in environments where dynamics are stochastic or partially unknown. Popular methods that operate under unknown dynamics are rooted in  Reinforcement Learning \citep{sutton_reinforcement_1998}, which optimizes long-term utility through value function or policy estimation. These methods attempt to inject epistemic behavior by treating exploration as a distinct form of reward, as seen in Max-entropy Reinforcement Learning \citep{haarnoja_soft_2018,haarnoja_reinforcement_2017}. 

Active Inference proposes an alternative approach to planning under uncertainty \citep{friston_active_2015,parr_generalised_2019,da_costa_active_2020}. This framework provides a neurobiological explanation of intelligent behavior and posits that the optimal policy that balances exploitative and explorative behavior emerges when minimizing a quantity known as the Expected Free Energy (EFE) \citep{friston_action_2010,da_costa_active_2020}. However, the EFE is an objective that is defined over sequences of actions and does therefore not define a variational objective over beliefs that we can optimize.

Recently, an attempt has been made to redefine EFE-based planning as a standard Variational Free Energy \citep{de_vries_expected_2025} by adjusting the generative model by introducing epistemic priors. 

In this paper, we will take a closer look at the objective defined by \cite{de_vries_expected_2025} and frame it as a form of entropic inference, as defined by \cite{lazaro-gredilla_what_2024}. Afterwards, we will derive a message passing scheme that corresponds to the found formulation of Active Inference and which can be locally minimized on a Factor Graph. 

The main contributions of this paper are twofold: 
\begin{itemize}
    \item We formally reframe Active Inference's EFE minimization as a form of entropy-corrected variational inference, explicitly demonstrating that the epistemic drive corresponds to a unique entropic contribution within the variational objective.
    \item We derive a message-passing scheme for this unified objective. Crucially, this scheme introduces region-extended Bethe coordinates and an $r$-channel reparameterization coordinate, which together turn a degenerate conditional entropy into a local cross-entropy and render the overall objective computationally feasible for local optimization on a factor graph.
\end{itemize}

The rest of the paper is structured as follows: in \Cref{sec:adjusted_objective} we recover Active Inference as a form of variational inference similar to how planning is recovered in \cite{lazaro-gredilla_what_2024}. This illustrates that the epistemic drive introduced by Active Inference can be materialized as an entropic contribution to the variational objective. In \Cref{sec:message_passing} we will derive a message passing scheme to minimize the Active Inference objective, providing a method to implement scalable Active Inference.

For a definition of the terminology used in the rest of the paper, we refer the reader to \autoref{appx:background}.

\section{Active Inference as Entropy Corrected Inference} \label{sec:adjusted_objective}
% \bdv{From here on, the paper is technically so dense that you need to better explain in the first two sentences of each sub-section what you are actually going to do.}
In this section, we will rewrite the Variational Free Energy of an adjusted generative model as a form of entropy corrected inference, comparing it to other formulations of planning-as-inference and posing Active Inference as a separate method on the variational inference landscape.

 We will consider the following standard biased generative model
\begin{equation} \label{eq:generative_model}
    p(\bm{y}, \bm{x}, \theta, \bm{u}) \propto p(\theta) p(x_0) \prod_{t=1}^T p(y_t | x_t, \theta) p(x_t | x_{t-1}, u_t, \theta) p(u_t) \hat{p}_{x}(x_t) \hat{p}_{y}(y_t).
\end{equation}

Here, $\bm{y}$ are observations, $\bm{x}$ are latent states, $\theta$ are hidden parameters, and $\bm{u}$ is a sequence of control signals or actions. $\hat{p}_{x}(x_t)$ and $\hat{p}_{y}(y_t)$ represent goal priors on future states and observations, which can be proportional to a prespecified reward but do not necessarily need to be.

The variational objective defined by \citet{de_vries_expected_2025} manipulates the Variational Free Energy (VFE) of the model in \eqref{eq:generative_model} through the inclusion of epistemic priors. In this section, we demonstrate that the framework of \citet{de_vries_expected_2025} extends beyond Active Inference by reformulating their objective within the broader landscape of entropic inference introduced by \citet{lazaro-gredilla_what_2024}. 

In this entropic inference framework, all inference types minimize a common VFE \eqref{eq:vfe-definition} while differing only in their entropy corrections. Following this principle, \autoref{thm:bfe} shows that the VFE of the epistemic-prior-augmented generative model from \eqref{eq:adjusted_generative_model} can be equivalently expressed as the VFE of the original generative model plus specific entropy correction terms, thereby positioning Active Inference within the unified variational inference landscape of \autoref{tab:entropic-landscape}.

\begin{theorem} \label{thm:bfe} The variational objective presented in \cite{de_vries_expected_2025} (presented in \autoref{subsec:manipulated-vfe}) can be rearranged in the following way:
\begin{equation}
    F_{\tilde{p}}[q] = F_p[q] + \sum_{t=1}^{T} \ent{q(x_{t-1}, u_t)} -\ent{q(x_t, x_{t-1}, u_t)}  + \ent{q(y_t, x_t, \theta)}  - \ent{q(x_t, \theta)} 
\end{equation}
where $F_p[q]$ is the Variational Free Energy associated with the generative model.
\end{theorem}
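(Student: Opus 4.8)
The plan is to prove the identity by a direct manipulation of the two free energies, letting the shared factors of the two generative models cancel. Writing the variational free energies in their defining form, $F_{\tilde p}[q] = \mathbb{E}_q[\log q - \log \tilde p]$ and $F_p[q] = \mathbb{E}_q[\log q - \log p]$, the term $-\mathbb{E}_q[\log q]$ is common to both, so the difference collapses to $F_{\tilde p}[q] - F_p[q] = \mathbb{E}_q[\log p - \log \tilde p]$. Everything then reduces to reading off the log-ratio between the biased model $p$ of \eqref{eq:generative_model} and its epistemic-prior augmentation $\tilde p$ of \eqref{eq:adjusted_generative_model} (restated in \autoref{subsec:manipulated-vfe}), and taking its $q$-expectation.

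Next I would carry out the cancellation factor by factor. The two models share $p(\theta)$, $p(x_0)$ and, at every step $t$, the likelihood $p(y_t\mid x_t,\theta)$, the transition $p(x_t\mid x_{t-1},u_t,\theta)$, the action prior $p(u_t)$ and the goal priors $\hat p_x(x_t)$, $\hat p_y(y_t)$; all of these drop out of $\log p - \log \tilde p$, as does the latent-independent normalizer (which contributes at most a $q$-independent constant, and one checks it vanishes because $\tilde p$ is specified relative to the same normalization and the epistemic-prior factors are themselves normalized conditionals). What remains is exactly the product of epistemic-prior factors of \citet{de_vries_expected_2025}: after cancellation they amount to multiplying $p$ by $q(y_t\mid x_t,\theta)$ and dividing by $q(x_t\mid x_{t-1},u_t)$, one pair per step, so that $F_{\tilde p}[q] - F_p[q] = \sum_{t=1}^{T}\mathbb{E}_q[\log q(x_t\mid x_{t-1},u_t) - \log q(y_t\mid x_t,\theta)]$.

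Finally I would turn each expectation into entropies. Since $\log q(x_t\mid x_{t-1},u_t)$ depends only on $(x_t,x_{t-1},u_t)$, its $q$-expectation equals $-\ent{q(x_t\mid x_{t-1},u_t)}$, and the chain rule for entropy gives $-\ent{q(x_t\mid x_{t-1},u_t)} = \ent{q(x_{t-1},u_t)} - \ent{q(x_t,x_{t-1},u_t)}$. Symmetrically, $-\mathbb{E}_q[\log q(y_t\mid x_t,\theta)] = \ent{q(y_t\mid x_t,\theta)} = \ent{q(y_t,x_t,\theta)} - \ent{q(x_t,\theta)}$. Summing over $t$ and adding $F_p[q]$ yields precisely the claimed expression.

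The only genuinely delicate step is the second one: one must match the epistemic-prior construction of \citet{de_vries_expected_2025} to the two conditional variational distributions above and verify they are normalized conditionals of the chosen (appropriately structured) $q$, so that the chain-rule rewriting in the third step is legitimate and no residual normalization constant survives. Once the correct pair of factors is identified, the remainder is routine bookkeeping with the entropy chain rule.
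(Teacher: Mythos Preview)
Your skeleton---write $F_{\tilde p}[q]-F_p[q]=-\sum_t \mathbb{E}_q\big[\log \tilde p(x_t)+\log\tilde p(u_t)+\log\tilde p(y_t,x_t)\big]$ and then rewrite via the entropy chain rule---matches the paper. The gap is the middle step, which you flag as delicate but also misdescribe.

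The epistemic priors in \autoref{subsec:manipulated-vfe} are \emph{not} the conditionals $q(y_t\mid x_t,\theta)$ and $1/q(x_t\mid x_{t-1},u_t)$; they are $\tilde p(u_t)=\exp\!\big(\ent{q(x_t,x_{t-1}\mid u_t)}-\ent{q(x_{t-1}\mid u_t)}\big)$, $\tilde p(x_t)=\exp\!\big(-\ent{q(y_t\mid x_t)}\big)$, and $\tilde p(y_t,x_t)=\exp D_{KL}\!\big[q(\theta\mid y_t,x_t)\,\|\,q(\theta\mid x_t)\big]$, i.e.\ exponentials of entropies and a KL of $q$-marginals. So the claim ``after cancellation they amount to multiplying $p$ by $q(y_t\mid x_t,\theta)$ and dividing by $q(x_t\mid x_{t-1},u_t)$'' is false pointwise, and the verification you propose (that they are normalized conditionals of $q$) cannot succeed because no such identity holds. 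What \emph{is} true is the weaker statement that the $q$-expectation of $-\log$ of the three priors together equals $\mathbb{E}_q[\log q(x_t\mid x_{t-1},u_t)]-\mathbb{E}_q[\log q(y_t\mid x_t,\theta)]$; but that equality is the actual content of the theorem and must be computed, not asserted.

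The paper does this via three lemmas, one per prior: $-\mathbb{E}_q[\log\tilde p(u_t)]=\ent{q(x_{t-1},u_t)}-\ent{q(x_t,x_{t-1},u_t)}$, $-\mathbb{E}_q[\log\tilde p(x_t)]=\ent{q(y_t,x_t)}-\ent{q(x_t)}$, and $-\mathbb{E}_q[\log\tilde p(y_t,x_t)]=\ent{q(y_t,x_t,\theta)}-\ent{q(y_t,x_t)}-\ent{q(x_t,\theta)}+\ent{q(x_t)}$. Each pushes the outer $q$-expectation through the exponential and then through the inner integral defining the entropy/KL, using the posterior factorization \eqref{eq:posterior-factorization}. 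Summing, the $\ent{q(y_t,x_t)}$ and $\ent{q(x_t)}$ terms cancel between the last two lemmas, giving the stated result directly---so once these computations are in place, your chain-rule step is already absorbed.
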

\begin{proof}
Given in \autoref{appx:adjusted_objective_proof}.
\end{proof}

We are now in the position to compare Active Inference with other forms of entropic inference. Interestingly, by \lref{lemma:p_tilde_u}, 
% \bdv{In the pdf, you can click to this Lemma, but in the printed version it would be nice if a forward reference includes a page number or section number rather than only a Lemma that is located somewhere in the rest of the paper.}
an adjusted generative model with only $\tilde{p}(u_t)$ as entropic prior recovers a form of inference surprisingly similar to \cite{lazaro-gredilla_what_2024}. However, where the planning-as-inference objective defines a degenerate optimization procedure, this objective admits an optimization scheme. This point will be elaborated on in \Cref{sec:message_passing}. We will refer to this type of inference as Maximum Ambiguity (MaxAmb) planning, and with the inclusion of $\tilde{p}(x_t)$ and $\tilde{p}(y_t, x_t)$, recovers Active Inference. An overview of the different types of entropic inference is given in \autoref{tab:entropic-landscape}.

\begin{table}[ht]
    \centering
    \caption{Positioning Active Inference within the variational inference landscape. Following \citet{lazaro-gredilla_what_2024}, various inference methods can be expressed as energy minimization with different entropy corrections. Active Inference emerges as a natural extension that incorporates both planning and epistemic (ambiguity-reducing) terms. Note a slight difference from the exposition presented in \cite[Table 1]{lazaro-gredilla_what_2024}: there, the entropy correction is presented for the so-called energy term, but these two frameworks are trivially equivalent in the cases presented below. However, we find this table clearer when written as an entropic correction for VFE, because it becomes much easier to determine the degenerate schemes (this point will be elaborated in detail in \autoref{sec:message_passing}).}
    \begin{adjustbox}{width=1\textwidth}
    \begin{tabular}{l |  l}
    Type of inference &  Entropy correction (relative to VFE)\\
    \hline
    Marginal &  0 \\
    MAP & $\ent{q}$ \\
    Planning &  $\sum_{t=1}^T \ent{q(x_{t-1}, u_t)} - \ent{q(x_{t-1})}$ (\autoref{appx:gredilla_entropy_decomp})\\
    MaxAmb planning & $\sum_{t=1}^T \ent{q(x_{t-1}, u_t)} - \ent{q(x_t, x_{t-1}, u_t)}$\\
    Active Inference  & $\sum_{t=1}^{T} \ent{q(x_{t-1}, u_t)} -\ent{q(x_t, x_{t-1}, u_t)}  + \ent{q(y_t, x_t, \theta)}  - \ent{q(x_t, \theta)}$
    \end{tabular}
    \end{adjustbox}
    \label{tab:entropic-landscape}
\end{table}

Interestingly enough, the contributions from \lref{lemma:p_tilde_x} and \lref{lemma:p_tilde_x_y} contain the same terms with their signs flipped, where all contributions from $\tilde{p}(x_t)$ are canceled out. This warrants a revision of the epistemic priors $\tilde{p}(x_t)$ and $\tilde{p}(y_t, x_t)$. In \autoref{section:proof-replacment} we provide a proof that these two priors can be replaced by $\tilde{p}(x_t, \theta) = \exp \left( - \ent{q(y_t \mid x_t, \theta)} \right)$ without changing the inference objective. This re-arrangement is theoretically useful because it shows that parameters and states are actually not distinguished by entropic priors, and the only possible distinction could come from the generative model itself. With the landscape of entropic inference set up, we are in a position to derive a message-passing procedure corresponding to Active Inference.

% Message passing is the art of computing marginals, but objective in Berts proof inhertently depends on conditionals. By this observation, it is not obvious that any message passing scheme for this optimization exists. Because conditionals are inheritently nonlocal. However, joint marginals live on some nodes, and are therefore local.
\section{Deriving Message Passing} \label{sec:message_passing}

% \bdv{There is so much forward referencing going on, and it is so compactly written that the reader gets lost somewhere around \eqref{eq:factor-index-set}. Perhaps better to state the result first and then show how it was reached?}
\begin{figure}
    \centering
    \includegraphics[width=0.8\linewidth]{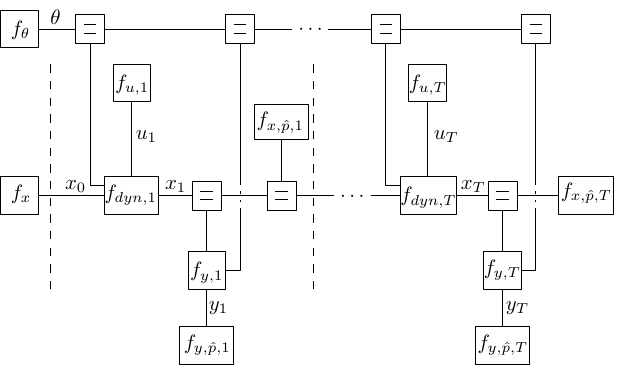}
    \caption{\textbf{Factor graph representation of the generative model \eqref{eq:generative_model}.} 
Nodes (boxes) represent factors from the generative model: $f_\theta$ is the prior on parameters, $f_{x_0}$ is the initial state prior, $f_{\mathrm{dyn},t}$ represents the dynamics $p(x_t|x_{t-1},\theta,u_t)$, $f_{y,t}$ represents observations $p(y_t|x_t,\theta)$, and $f_{u,t}$, $f_{x,\hat{p},t}$, $f_{y,\hat{p},t}$ represent action priors and goal priors respectively. 
Edges (lines) represent random variables: $\theta$ (parameters), $x_t$ (states), $y_t$ (observations), and $u_t$ (actions). 
In the Bethe approximation, each \emph{node} $a$ maintains a local belief $q_a(s_a)$ over its scope (the variables connected to it), while each \emph{edge} $i$ maintains a singleton belief $q_i(s_i)$. These local beliefs must satisfy consistency constraints \eqref{eq:local-consistency-constraint}. 
This factorization enables \emph{local} optimization scheme (message passing): rather than optimizing a single global distribution $q(y,x,\theta,u)$, we optimize a collection of local beliefs that communicate through messages.
}
    \label{fig:bethe-model}
\end{figure}
% \bdv{This is unclear to me. First, unclear what dyn is. (This gets somewhat clearer from \eqref{eq:our-factors}, but it is uncomfortable if abbreviations or symbols are used that get explained later.) Secondly, why time $t$ as (multiple) variables? Thirdly, usage of $\mathcal{N}$ overloads the Gaussian distribution symbol. Fourthly, draw a (factor graph) to show it.}

To obtain a \emph{local} objective amenable to message passing, we replace the global VFE in \autoref{thm:bfe} by its Bethe approximation presented in detail in \autoref{subsec:bethe-free-energy}. On tree-structured instances of \eqref{eq:generative_model} this replacement is \emph{exact} (for instance, a $\theta$-free model); otherwise, it is a standard variational approximation. But to define the Bethe objective, we need to identify our model with a factor graph (shown in \autoref{fig:bethe-model}). We start with the node set $\mathcal{V}$ \begin{subequations}\label{eq:our-factors}
\begin{align}
f_{\theta}(\theta)&=p(\theta), &
f_{x_0}(x_0)&=p(x_0),\\
f_{y,t}(y_t,x_t,\theta)&=p(y_t\mid x_t,\theta), &
f_{\mathrm{dyn},t}(x_t,x_{t-1},\theta,u_t)&=p(x_t\mid x_{t-1},\theta,u_t),\\
f_{u,t}(u_t)&=p(u_t), &
f_{x,\hat p,t}(x_t)&=\hat p_x(x_t), \quad
f_{y,\hat p,t}(y_t)=\hat p_y(y_t).
\end{align}
\end{subequations} With the natural set of edges $\mathcal{E}$ \begin{equation}
    \mathcal{E} := \{\theta,\ x_0\}\ \cup\ \bigcup_{t=1}^T \{x_t,\ y_t,\ u_t\}.
\end{equation} Each node \(a\in\mathcal{V}\) has scope \(s_a\subseteq\mathcal{E}\).

In Bethe terminology, each node $a \in \mathcal{V}$ has an associated \emph{local} belief $q_a(s_a)$ over its scope. Additionally, to impose consistency constraints between nodes, we introduce singleton beliefs $q_i(s_i)$ for each edge $i \in \mathcal{E}$. Together, these node beliefs $\{q_a(s_a)\}_{a \in \mathcal{V}}$ and edge beliefs $\{q_i(s_i)\}_{i \in \mathcal{E}}$ form what we call the \textbf{Bethe coordinates}:  a collection of normalized probability distributions that must satisfy the local consistency constraints:
\begin{equation}\label{eq:local-consistency-constraint}
    \int q_a(s_a)\dif s_{a\setminus i}=q_i(s_i)
\end{equation}
whenever $i \in s_a$.

With this notation, the Bethe Free Energy specialized to \eqref{eq:generative_model} takes the standard form
\[
F_f[q]\;=\;\sum_{a\in\mathcal{V}} \KL{q_a(s_a)}{f_a(s_a)}\;+\;\sum_{i\in\mathcal{E}}(d_i-1)\,\ent{q_i(s_i)},
\]
whose fully expanded expression and $d_{i}$ are given in \autoref{eq:bethe-our-model-updated} and \autoref{eq:degrees-our-model-updated} respectively.

But to make the objective from \autoref{thm:bfe} local, we must express all its terms using local marginals; otherwise, it is a global objective. Intuitively, this means we need to find a node in our factor graph to which we can attach each new term. For instance, $q(y_t, x_t, \theta)$ can be attached to the node $f_{y,t}$ and be identified with $q_{y,t}$ Bethe coordinate.

However, the Bethe coordinates alone are \emph{insufficient} for the adjusted objective in \autoref{thm:bfe}, because the entropic correction contains
\[
 -\ \ent{q(x_t,\theta)}\ +\ \ent{q(x_{t-1},u_t)}\ -\ \ent{q(x_t,x_{t-1},u_t)}.
\]
These three entropy terms involve marginal distributions that are not Bethe coordinates: $q(x_t,\theta)$, $q(x_{t-1},u_t)$, and $q(x_t,x_{t-1},u_t)$. None of these distributions correspond to the scope of any node in our factor graph, nor are they singleton beliefs over edges. Note, the same exact reasoning applies to Planning and MaxAbm Planning from \autoref{tab:entropic-landscape}.

To keep the objective local, we therefore introduce three auxiliary \emph{region beliefs}:
\begin{subequations}\label{eq:aux-beliefs}
\begin{align}
q_{\mathrm{sep},t}(x_t,\theta)
&:= \int q_{y,t}(y_t,x_t,\theta)\,\mathrm{d}y_t
 =  \int q_{\mathrm{dyn},t}(x_t,x_{t-1},\theta,u_t)\,\mathrm{d}x_{t-1}\mathrm{d}u_t,\\
q_{\mathrm{trip},t}(x_t,x_{t-1},u_t)
&:= \int q_{\mathrm{dyn},t}(x_t,x_{t-1},\theta,u_t)\,\mathrm{d}\theta, \\
q_{\mathrm{pair},t}(x_{t-1},u_t)
&:= \int q_{\mathrm{trip},t}(x_t,x_{t-1},u_t)\,\mathrm{d}x_t
\end{align}
\end{subequations}
with the natural projections (e.g., \(\int q_{\mathrm{trip},t}\,\mathrm{d}x_t=q_{\mathrm{pair},t}\), \(\int q_{\mathrm{sep},t}\,\mathrm{d}x_t=q_{\theta}\)) enforcing consistency with existing beliefs. We will refer to this coordinate system as the \emph{region-extended Bethe coordinates}. In the region-extended Bethe coordinates, the objective from \autoref{thm:bfe} can be expressed as follows\[
F_{f}[q]
\;+\;\sum_{t=1}^T\Big(
\ent{q_{y,t}}
\ -\ \ent{q_{\mathrm{sep},t}}
\ +\ \ent{q_{\mathrm{pair},t}}
\ -\ \ent{q_{\mathrm{trip},t}}
\Big).
\]

However, after adding the new region marginals, we obtain the local objective that is degenerate because of the term $\ent{q_{y,t}}$; we prove this supporting result \autoref{thm:obs-degeneracy-aug} in \autoref{subsec:deg-of-marginal-system}. To have both locality and full-identifiability, we augment the coordinates with a single \emph{channel} variable \(r_{y\mid x\theta,t}(y_t\mid x_t,\theta)\) with the natural normalization constraint \begin{equation}\label{eq:normalization-constraint}
    \int r_{y\mid x\theta,t}(y_{t}\mid x_{t}, \theta) \dif y_{t} = 1
\end{equation} 
 and rewrite the global conditional entropy as a local cross-entropy,
\[
H\big[q(y_t\mid x_t,\theta)\big]
\;=\;
\min_{r_{y\mid x\theta,t}}
\ \mathbb{E}_{q_{y,t}(y_t,x_t,\theta)}
\big[-\log r_{y\mid x\theta,t}(y_t\mid x_t,\theta)\big].
\]
Under this reparameterization, $q_{\mathrm{sep},t}(x_t,\theta)$ is no longer needed as a free coordinate in the objective: the global term \(H[q(y_t,x_t,\theta)]-H[q(x_t,\theta)]\) collapses into the conditional form and depends only on \((q_{y,t},r_{y\mid x\theta,t})\) locally. The following theorem shows the stationary conditions in the $r$-adjusted coordinate system. The proof of \autoref{thm:update-scheme} is provided in \autoref{appd:proof-scheme-theorem}.

% \bdv{quite an awkward notation to use $y\mid x\theta,t$ as an identifier subscript.}

% \bdv{the subscripts are awkward. Within a paragraph, I see a textual subscript ($\mathrm{sep}$), I see a complete math expression (as in $r_{y\mid x\theta,t}(\cdot)$), I see $q_{y,t}(y_t,x_t,\theta)$ where it's unclear if this is a different $q$ than $q_{y,t,x,\theta}(y_t,x_t,\theta)$, and I see $q$'s without any subscript (as in $H[q(x_t,\theta)]$. Please be consistent and clean in the subscripts. Perhaps easiest to remove all subscripts for $q$ if that is possible. }

\begin{theorem}[The stationary scheme for Active Inference]\label{thm:update-scheme}
Consider the Bethe objective \eqref{eq:bethe-our-model-updated} for the model \eqref{eq:generative_model} augmented by the Active Inference correction of \autoref{thm:bfe}, and adopt the adjusted coordinate system
\begin{align*}
\Big\{\,&q_{y,t}(y_t,x_t,\theta),\;q_{\mathrm{dyn},t}(x_t,x_{t-1},\theta,u_t),\;
q_{\mathrm{sep},t}(x_t,\theta), \\
&q_{\mathrm{trip},t}(x_t,x_{t-1},u_t),\;q_{\mathrm{pair},t}(x_{t-1},u_t),\;
r_{y\mid x\theta,t}(y_t\mid x_t,\theta)\,\Big\}_{t=1}^T
\end{align*}
with the projection constraints \autoref{eq:aux-beliefs}
and the row–normalization \autoref{eq:normalization-constraint}.
Then any stationary point satisfies, for each \(t=1,\dots,T\), the following local equations (all equalities are up to normalizers):

\begin{align}
&q_{y,t}(y_t,x_t,\theta)
\propto
p(y_t\mid x_t,\theta)\;
r_{y\mid x\theta,t}(y_t\mid x_t,\theta)\;
\hat{p}_{y}(y_{t})\;
\exp\{-\Lambda_{x\theta}(x_t,\theta)\},
\label{eq:AI-scheme-qy}\\[2pt]
&r_{y\mid x\theta,t}(y_t\mid x_t,\theta)
\propto \frac{q_{y,t}(y_t,x_t,\theta)}{q_{\mathrm{sep},t}(x_t,\theta)}
\label{eq:AI-scheme-r}\\[2pt]
&\exp\{-\Lambda_{x\theta}(x_t,\theta)\}
\propto
\frac{\displaystyle \int p(y_t\mid x_t,\theta)\,r_{y\mid x\theta,t}(y_t\mid x_t,\theta)\,\hat{p}_{y}(y_{t})\,{\rm d}y_t}
{\,q_{\mathrm{sep},t}(x_t,\theta)\,}.
\label{eq:AI-scheme-Lxth}
\end{align}

\begin{align}
&q_{\mathrm{dyn},t}(x_t,x_{t-1},\theta,u_t)
\propto
p(x_t\mid x_{t-1},\theta,u_t)\;
\exp\{-\Lambda_{x\theta}(x_t,\theta)\}\;
\exp\{-\Lambda_{\mathrm{trip}}(x_t,x_{t-1},u_t)\},
\label{eq:AI-scheme-qdyn}\\[2pt]
&\exp\{-\Lambda_{\mathrm{trip}}(x_t,x_{t-1},u_t)\}
\propto
\frac{q_{\mathrm{pair},t}(x_{t-1},u_t)}{q_{\mathrm{trip},t}(x_t,x_{t-1},u_t)}.
\label{eq:AI-scheme-Ltrip}
\end{align}
The region beliefs are tied by the projections
\(\int q_{\mathrm{dyn},t}\,{\rm d}\theta=q_{\mathrm{trip},t}\) and
\(\int q_{\mathrm{trip},t}\,{\rm d}x_t=q_{\mathrm{pair},t}\).

\paragraph{All remaining coordinates.}
Singletons \(q_{x_t}, q_{y_t}, q_{u_t}, q_{\theta}, q_{x_0}\) and unary factor beliefs (including \(\hat p_x,\hat p_y\)) satisfy the classical Bethe equations, i.e.\ the standard belief–propagation fixed–point conditions on the generative model; equivalently, their multipliers/messages are exactly those of BP on \eqref{eq:generative_model} (with degrees \eqref{eq:degrees-our-model-updated}) and are not modified by the entropic correction.

\end{theorem}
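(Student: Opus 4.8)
The plan is to characterize the stationary points of the augmented Bethe objective as the solutions of its first--order (Lagrange) conditions, and then to identify the Lagrange multipliers with messages and region corrections so as to read off the stated equations. First I would assemble the local objective in the $r$--adjusted coordinates: the Bethe free energy $F_f[q]$ expanded as in \autoref{eq:bethe-our-model-updated}, plus the Active Inference correction of \autoref{thm:bfe} after the cross--entropy substitution for the $y$--part, i.e.\ $\sum_{t}\big(\mathbb{E}_{q_{y,t}}[-\log r_{y\mid x\theta,t}] + \ent{q_{\mathrm{pair},t}} - \ent{q_{\mathrm{trip},t}}\big)$, the first summand replacing $\ent{q_{y,t}}-\ent{q_{\mathrm{sep},t}}=H[q(y_t\mid x_t,\theta)]$. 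I would then form the Lagrangian by adjoining (i) normalization of every belief, (ii) the region--projection constraints \eqref{eq:aux-beliefs}, with the multiplier $\Lambda_{x\theta}(x_t,\theta)$ attached to the $q_{y,t}$/$q_{\mathrm{dyn},t}$--to--$q_{\mathrm{sep},t}$ projections, $\Lambda_{\mathrm{trip}}(x_t,x_{t-1},u_t)$ to $\int q_{\mathrm{dyn},t}\,\mathrm{d}\theta=q_{\mathrm{trip},t}$, and $\Lambda_{\mathrm{pair}}(x_{t-1},u_t)$ to $\int q_{\mathrm{trip},t}\,\mathrm{d}x_t=q_{\mathrm{pair},t}$, (iii) the classical Bethe edge--consistency constraints \eqref{eq:local-consistency-constraint}, and (iv) the row normalization \eqref{eq:normalization-constraint} with multiplier $\nu_t(x_t,\theta)$. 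Setting each functional derivative to zero then produces the displayed equations.

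I would dispatch the easy variations first. Stationarity in $r_{y\mid x\theta,t}$ is the inner minimization: $\delta/\delta r$ of $\mathbb{E}_{q_{y,t}}[-\log r]+\nu_t(\int r\,\mathrm{d}y_t-1)$ gives $r_{y\mid x\theta,t}\propto q_{y,t}/\nu_t$, and the row normalization together with the projection $q_{\mathrm{sep},t}=\int q_{y,t}\,\mathrm{d}y_t$ pins $\nu_t=q_{\mathrm{sep},t}$ -- this is \eqref{eq:AI-scheme-r}, the familiar statement that the optimal channel in the variational form of a conditional entropy is the true conditional. Stationarity in $q_{y,t}$ collects $-\log p(y_t\mid x_t,\theta)$ from $\KL{q_{y,t}}{f_{y,t}}$, $-\log r_{y\mid x\theta,t}$ from the cross--entropy, the multiplier $\Lambda_{x\theta}$ from the $q_{y,t}$--to--$q_{\mathrm{sep},t}$ projection, and the edge multiplier on $y_t$; at a Bethe fixed point the latter equals the product of messages into edge $y_t$ from its neighbours other than $f_{y,t}$, which here is only the goal node $f_{y,\hat p,t}$ and contributes $\hat p_y(y_t)$ (using $d_{y_t}=2$). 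Exponentiating yields \eqref{eq:AI-scheme-qy}. Stationarity in $q_{\mathrm{dyn},t}$ collects $p(x_t\mid x_{t-1},\theta,u_t)$ from its KL term, the multiplier of the $q_{\mathrm{dyn},t}$--to--$q_{\mathrm{sep},t}$ projection, and $\Lambda_{\mathrm{trip}}$ from $q_{\mathrm{dyn},t}$--to--$q_{\mathrm{trip},t}$; since $q_{\mathrm{sep},t}$ carries no objective term after the reparameterization, its own stationarity merely ties the two projection multipliers together, which lets one rewrite the first factor as $\exp\{-\Lambda_{x\theta}\}$ and gives \eqref{eq:AI-scheme-qdyn}. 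Stationarity in $q_{\mathrm{trip},t}$ (derivative of $-\ent{q_{\mathrm{trip},t}}$ is $+\log q_{\mathrm{trip},t}$) and in $q_{\mathrm{pair},t}$ (derivative of $+\ent{q_{\mathrm{pair},t}}$ is $-\log q_{\mathrm{pair},t}$), combined with the two relevant projection multipliers, eliminates the $q_{\mathrm{trip},t}$--to--$q_{\mathrm{pair},t}$ multiplier and gives \eqref{eq:AI-scheme-Ltrip}. Finally \eqref{eq:AI-scheme-Lxth} follows by substituting the solved $q_{y,t}$ into $q_{\mathrm{sep},t}=\int q_{y,t}\,\mathrm{d}y_t$, pulling $\exp\{-\Lambda_{x\theta}\}$ out of the $y_t$--integral and solving for it.

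For the final part of the statement -- the \emph{all remaining coordinates} paragraph -- the argument is structural. The Active Inference correction and the $r$--reparameterization contribute, as objective terms, only $q_{y,t},q_{\mathrm{dyn},t},q_{\mathrm{trip},t},q_{\mathrm{pair},t}$ and $r_{y\mid x\theta,t}$; no singleton edge belief ($q_{x_t},q_{y_t},q_{u_t},q_{\theta},q_{x_0}$) and no unary factor belief (including $q_{x,\hat p,t},q_{y,\hat p,t}$) appears in any new objective term. Hence, modulo constraint bookkeeping, their stationarity equations coincide with those of the unmodified $F_f[q]$, i.e.\ the belief--propagation fixed--point conditions on \eqref{eq:generative_model} with degrees \autoref{eq:degrees-our-model-updated}. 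The delicate point is that the region--projection constraints reach the edges $x_t,\theta,x_{t-1},u_t$ at their boundary ($q_{\mathrm{sep},t}$ further projects onto $q_{x_t}$ and $q_{\theta}$, $q_{\mathrm{pair},t}$ onto $q_{x_{t-1}}$ and $q_{u_t}$), so one must verify that composing these projections with the classical consistency constraints leaves the edge and unary equations unchanged; this holds because an intermediate region belief carrying no objective term simply forwards its constraint and can be contracted out, making the composite (e.g.\ $q_{y,t}\to q_{\mathrm{sep},t}\to q_{x_t}$) equivalent, at the level of the edge equation, to the direct Bethe projection -- which is exactly what the ``updated'' degrees \autoref{eq:degrees-our-model-updated} encode.

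The step I expect to be the main obstacle is precisely this constraint--multiplier bookkeeping: tracking how $\Lambda_{x\theta},\Lambda_{\mathrm{trip}},\Lambda_{\mathrm{pair}}$ couple to the classical edge multipliers, fixing the signs and orientations so that \eqref{eq:AI-scheme-qy}, \eqref{eq:AI-scheme-qdyn} and \eqref{eq:AI-scheme-Lxth} come out in the displayed direction, and checking cleanly that the classical BP equations are untouched on the remaining coordinates. A convenient way to organize this is to show the constraint system decouples into a \emph{classical block} (edges, unary factors, and the KL parts of $F_f$) and a \emph{correction block} (the region beliefs, the $r$--channel, and the entropic terms) communicating only through the two shared beliefs $q_{y,t}$ and $q_{\mathrm{dyn},t}$; one then solves each block separately and checks consistency on those two variables.
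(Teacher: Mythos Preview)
Your proposal is correct and follows essentially the same approach as the paper's proof in \autoref{appd:proof-scheme-theorem}: form the Lagrangian for the augmented Bethe objective in the $r$--adjusted coordinates, take functional derivatives with respect to each coordinate, and identify the multipliers via the separator/projection constraints. Your lemmas line up one--for--one with the paper's (Lemmas~\ref{lem:qy-stationarity-channel}--\ref{lem:Lambda-trip-form}); the only presentational differences are that you fold the identification $e^{-\lambda_y(y_t)}\leftrightarrow \hat p_y(y_t)$ directly into the $q_{y,t}$ variation (the paper keeps a generic $\lambda_y$ in Lemma~\ref{lem:qy-stationarity-channel} and notes the identification at the end), and you give a more explicit structural argument for the ``all remaining coordinates'' claim than the paper does.
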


\section{Discussion}\label{sec:discussion}

While our theoretical framework provides principled planning with epistemic objectives, its computational implementation faces significant challenges that warrant careful analysis.

To implement our scheme, we must address the nontrivial factor graph structure shown in \autoref{fig:factor_graph}.
This is a Forney-style factor graph \citep{forney_codes_2001} representing a single time slice, where variables are represented on edges and factors are represented as nodes.
Message passing algorithms are generally implemented on factor graphs to leverage their locality \citep{bagaev_reactive_2023}.
However, the factor graph corresponding to the scheme introduced in \Cref{sec:message_passing} is nontrivial.
Unlike standard factor graphs derived directly from generative models, our approach requires region-based representations \citep{yedidia_constructing_2005} with edges representing multiple variables.
In \autoref{fig:factor_graph}, we see the edges representing $(x_{t-1}, \theta)$ and $(x_t, \theta)$.
Furthermore, there are additional nodes (shown in dotted lines in \autoref{fig:factor_graph}) that are necessary to compute the new coordinates in our optimization procedure.
The functional form of these nodes is currently unknown, but the messages that are sent are defined by the schedule derived in \autoref{thm:update-scheme}.
This means we cannot interpret what these nodes do concretely, highlighting a gap between our theoretical framework and its practical implementation.

Previous attempts have been made to implement a minimization of the objective presented in \cite{de_vries_expected_2025}.
These attempts have also implemented a message-passing procedure on a factor graph \citep{nuijten_message_2025}.
The scheme previously derived manually recomputes the epistemic priors for each iteration of the variational inference procedure.
Still, the scheme can be viewed as a linearization of the true message-passing scheme, which is derived in this work.

Beyond structural challenges, the computational complexity of our approach is quadratic in the state space size.
Previous derivations of Planning-as-Inference express the computational cost of the procedure in terms of the number of variables \citep{lazaro-gredilla_what_2024}.
We argue that this is a misleading way to express cost, as the cost of computing joint marginal distributions greatly depends on the size and dimensionality of the state space.
The scheme introduced in \Cref{sec:message_passing} warrants the computation of $q_{y,t}(y_t, x_t, \theta)$, $q_{pair,t}(x_{t-1}, u_t)$ and $q_{trip,t}(x_t, x_{t-1}, u_t)$.
In discrete state and action spaces, the computational complexity of computing these quantities is polynomial in the state and action spaces.
The computation of $q_{trip,t}(x_t, x_{t-1}, u_t)$ is the most expensive, since it takes $\mathcal{O}(|X|^2 \cdot |U| \cdot D_\Theta)$ time, where $D_\Theta$ is the dimensionality of the parameter space.
Note that, if we would not localize the inference procedure and not introduce our message passing scheme, the computational complexity would be exponential in the planning horizon $T$. 
Interestingly enough, the scheme for Planning as Inference also requires this time complexity, as the same $q_{trip,t}(x_t, x_{t-1}, u_t)$ is computed.
The epistemic drive, however, elicits an additional complexity cost. Computing $q_{y,t}(y_t, x_t, \theta)$ takes $\mathcal{O}(|Y| \cdot |X| \cdot D_\Theta)$ time.
This quadratic dependence on the state space is limiting for interesting problems where the state space quickly grows with the size of the system, such as Minigrid environments \citep{chevalier-boisvert_minigrid_2023}.

These complexity limitations suggest that for this approach to truly scale, hierarchical state-space partitioning becomes essential.
The scheme derived in this paper warrants a partition of the state-space, hinting to a hierarchical generative model \citep{palacios_markov_2020,beck_dynamic_2025}. Such hierarchical partitioning would allow us to avoid the quadratic complexity within state-space partitions, dramatically reducing computational costs while maintaining the principled epistemic objectives of our framework.

\begin{figure}[bt]
    \centering
    \includegraphics[width=0.70\textwidth]{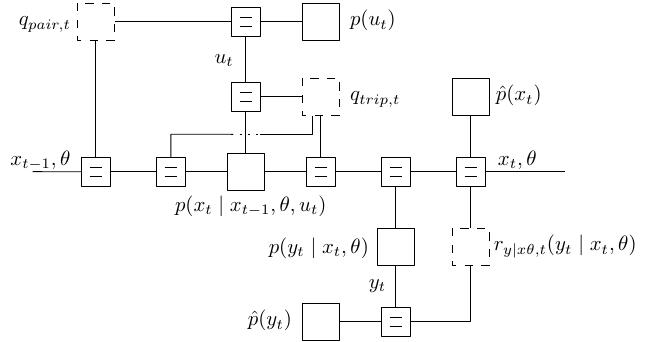}
    \caption{Time-slice factor graph corresponding to the scheme introduced in \autoref{sec:message_passing}. To form a full generative model to run inference, we chain $T$ of these slices to form a terminated factor graph.}
    \label{fig:factor_graph}
\end{figure}

\section*{Acknowledgements}
    This publication is part of the project ROBUST: Trustworthy AI-based Systems for Sustainable Growth with project number KICH3.LTP.20.006, which is (partly) financed by the Dutch Research Council (NWO), GN Hearing, and the Dutch Ministry of Economic Affairs and Climate Policy (EZK) under the program LTP KIC 2020-2023.

\bibliography{references}

\appendix
\numberwithin{equation}{section}

\section{Background} \label{appx:background}
\subsection{Variational Inference and the Posterior Factorization}
In standard Variational Inference, we minimize the Variational Free Energy (VFE) between a variational posterior $q$ and a generative model $p$:
\begin{equation} \label{eq:vfe-definition}
    F_p[q] = \int q(\bm{y}, \bm{x}, \theta, \bm{u}) \log \frac{q(\bm{y}, \bm{x}, \theta, \bm{u})}{p(\bm{y}, \bm{x}, \theta, \bm{u})} \dif \bm{y} \dif \bm{x} \dif \theta \dif \bm{u}.
\end{equation}
We are considering the factorized generative model defined in \eqref{eq:generative_model}. Since \eqref{eq:vfe-definition} defines a functional objective that we can minimize, we should also specify a family $\mathcal{Q}$ over which we are optimizing the VFE. We will choose the elements $q$ of this family such that they decompose as follows 
\begin{equation}\label{eq:posterior-factorization}
    q(\bm{y}, \bm{x}, \theta, \bm{u}) = q(x_0, \theta)\prod_{t=1}^{T} q(y_t \mid x_t, \theta) q(x_t \mid x_{t-1}, u_t, \theta) q(u_t \mid x_{t-1}, \theta).
\end{equation}
This factorization of the posterior distribution is required for the definition of the augmented generative model in \autoref{subsec:manipulated-vfe} \citep{nuijten_message_2025}. 
\subsection{The Epistemic-Prior-Augmented Generative Model} \label{subsec:manipulated-vfe}

The key insight of \citet{de_vries_expected_2025} is that Active Inference can be recovered through an \emph{adjusted} or \emph{augmented} generative model that includes additional factors called \emph{epistemic priors}. From the generative model, an augmented model is constructed
\begin{equation} 
    \tilde{p}(\bm{y}, \bm{x}, \theta, \bm{u}) = p(\bm{y}, \bm{x}, \theta, \bm{u}) \cdot \prod_{t=1}^T \tilde{p}(x_t) \tilde{p}(u_t) \tilde{p}(y_t, x_t).
    \label{eq:adjusted_generative_model}
\end{equation}
Here, the additional $\tilde{p}$ terms are epistemic priors, which are functions of the variational distribution $q$ itself, creating a self-consistent optimization problem. The epistemic priors take the following forms:
\begin{subequations}
\begin{align}
    \tilde{p}(u_t) &= \exp\left(\ent{q(x_t, x_{t-1} | u_t)} - \ent{q(x_{t-1} | u_t)}\right) && \text{(action prior)}; \\
    \tilde{p}(x_t) &= \exp\left(-\ent{q(y_t | x_t)}\right) && \text{(state prior)}; \\
    \tilde{p}(y_t, x_t) &= \exp\left(D_{KL}[q(\theta | y_t, x_t) \| q(\theta | x_t)]\right) && \text{(observation prior)},
\end{align}
\end{subequations} where the entropy of a distribution $q$ over variables $z_{1}, \dots, z_{n}$ definied as follows \begin{equation}
    \ent{q(z_1, \dots, z_{n})} = - \int q(z_{1}, \dots, z_{n}) \log q(z_{1}, \dots, z_{n}) \dif z_{1} \dots \dif z_{n},
\end{equation}
and the conditional entropy has the following functional form \begin{equation}
        \ent{q(z_1, \dots, z_{n} \mid \omega)} = - \int q(z_{1}, \dots, z_{n} \mid \omega) \log q(z_{1}, \dots, z_{n} \mid \omega) \dif z_{1} \dots \dif z_{n}.
\end{equation}
The epistemic priors admit a practical interpretation: the action prior encourages actions that resolve ambiguity, the state prior favors informative states, and the observation prior encourages observations that are informative about the parameters. 

The \emph{adjusted VFE} is then defined in the following way:
\begin{equation}
    F_{\tilde{p}}[q] = \int q(\bm{y}, \bm{x}, \theta, \bm{u}) \log \frac{q(\bm{y}, \bm{x}, \theta, \bm{u})}{\tilde{p}(\bm{y}, \bm{x}, \theta, \bm{u})} \dif \bm{y} \dif \bm{x} \dif \theta \dif \bm{u}.
\end{equation}
Crucially, \citet{de_vries_expected_2025} show that minimizing this adjusted VFE is equivalent to minimizing a bound on the Expected Free Energy (EFE) objective from Active Inference \citep{friston_active_2015}. 

\subsection{Bethe Free Energy}\label{subsec:bethe-free-energy}

Let $(\mathcal{V},\mathcal{E})$ be the (Forney-style) factor graph of a positive function
$f(s)=\prod_{a\in\mathcal{V}} f_a(s_a)$, where $s_a$ collects the variables incident on factor $a$, and each edge $i\in\mathcal{E}$ carries a variable $s_i$. The \emph{Bethe variational family} consists of \emph{factor beliefs} $\{q_a(s_a)\}_{a\in\mathcal{V}}$ and \emph{edge beliefs} $\{q_i(s_i)\}_{i\in\mathcal{E}}$ constrained to the \emph{marginal manifold}:
\begin{subequations}\label{eq:bethe-manifold}
\begin{align}
&\int q_a(s_a)\,\mathrm{d}s_a = 1 \quad &&\forall a\in\mathcal{V},\\
&\int q_i(s_i)\,\mathrm{d}s_i = 1 \quad &&\forall i\in\mathcal{E},\\
&\int q_a(s_a)\,\mathrm{d}s_{a\setminus i} = q_i(s_i) \quad &&\forall a\in\mathcal{V},\ \forall i\in a,
\end{align}
\end{subequations}
where $s_{a\setminus i}$ denotes the variables in $s_a$ except $s_i$. Let $d_i$ be the number of factors incident on edge $i$ (the degree of variable $s_i$). The \emph{Bethe Free Energy} (BFE) \citep{yedidia_constructing_2005} is
\begin{equation}\label{eq:classical-bethe}
F_f[q] \;=\; \sum_{a\in\mathcal{V}} \KL{q_{a}(s_a)}{f_{a}(s_{a})} \;+\; \sum_{i\in\mathcal{E}} (d_i-1)\,\ent{q_i(s_i)},
\end{equation} where \begin{equation}
    \KL{q_{a}(s_a)}{f_{a}(s_{a})} = \int q_{a}(s_{a}) \log \frac{q_{a}(s_{a})}{f_{a}(s_a)} \dif s_{a}.
\end{equation} The \emph{Bethe approximation} takes $\log Z_f \approx -\min_{q\in\mathcal{M}} F_f[q]$, where $\mathcal{M}$ is the manifold \eqref{eq:bethe-manifold}. The approximation is \emph{exact} when the factor graph is a tree; on loopy graphs, stationary points of \eqref{eq:classical-bethe} coincide with fixed points of (loopy) belief propagation \citep{yedidia_constructing_2005}. We refer the reader to \cite{senoz_message_2022} for a modern variational calculus exposition on the message passing algorithms derivation from \eqref{eq:classical-bethe}.
\section{Proof of \autoref{thm:bfe}} \label{appx:adjusted_objective_proof}
To prove the theorem, we will require three lemmas:
\begin{lemma} \label{lemma:p_tilde_x}
Under the assumption that our posterior distribution factorizes as in \autoref{eq:posterior-factorization}, and $\tilde{p}(x_t) = \exp \left( -\ent{q(y_t \mid x_t)} \right)$:
\begin{equation}
        -\int q(x_{t}) \log \tilde{p}(x_t) \dif x_t = \ent{q(y_t, x_t)} - \ent{q(x_t)}
\end{equation}

\end{lemma}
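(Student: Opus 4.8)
The plan is to evaluate the left-hand side directly, recognize it as the $q(x_t)$-average of a conditional entropy, and then collapse that average with the entropy chain rule. Throughout, $q(x_t)$ and $q(y_t,x_t)$ denote marginals of the factorized posterior \eqref{eq:posterior-factorization}, and $q(y_t\mid x_t)=q(y_t,x_t)/q(x_t)$ is the induced conditional; the factorization assumption is what guarantees these single–time–slice objects are well defined and that the $q(y_t\mid x_t)$ appearing inside $\tilde{p}(x_t)$ is genuinely this conditional of $q$.

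First I would substitute $\tilde{p}(x_t)=\exp\!\big(-\ent{q(y_t\mid x_t)}\big)$, reading $\ent{q(y_t\mid x_t)}=-\int q(y_t\mid x_t)\log q(y_t\mid x_t)\dif y_t$ as a \emph{function} of the conditioning value $x_t$, per the conditional–entropy convention of \autoref{subsec:manipulated-vfe}. Then $\log\tilde{p}(x_t)=-\ent{q(y_t\mid x_t)}$, so
\[
-\int q(x_t)\log\tilde{p}(x_t)\dif x_t
\;=\;\int q(x_t)\,\ent{q(y_t\mid x_t)}\dif x_t .
\]

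The key step is then the chain rule for differential entropy: multiplying the integrand $-q(y_t\mid x_t)\log q(y_t\mid x_t)$ by $q(x_t)$, using $q(x_t)\,q(y_t\mid x_t)=q(y_t,x_t)$ and $\log q(y_t\mid x_t)=\log q(y_t,x_t)-\log q(x_t)$, and splitting the resulting double integral, the first piece is $\ent{q(y_t,x_t)}$ and the second (after integrating out $y_t$) is $-\ent{q(x_t)}$, which is exactly the claimed identity.

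This is a short computation, and I do not expect a genuine obstacle. The only thing that needs care is the bookkeeping that distinguishes the pointwise conditional entropy $\ent{q(y_t\mid x_t)}$ (a function of $x_t$) from its $q(x_t)$–average, and checking that no dependence on $\theta$ or on other time slices leaks in — which is handled by passing immediately to the joint marginal $q(y_t,x_t)$ rather than manipulating the factor $q(y_t\mid x_t,\theta)$ of \eqref{eq:posterior-factorization} directly.
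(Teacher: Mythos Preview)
Your proposal is correct and follows essentially the same route as the paper: substitute the definition of $\tilde{p}(x_t)$, recognize the result as the $q(x_t)$-average of the pointwise conditional entropy, merge $q(x_t)\,q(y_t\mid x_t)$ into $q(y_t,x_t)$, split $\log q(y_t\mid x_t)=\log q(y_t,x_t)-\log q(x_t)$, and read off the two entropy terms. Your remark distinguishing the pointwise $\ent{q(y_t\mid x_t)}$ from its average is a useful clarification that the paper's proof leaves implicit.
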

\begin{proof}
    \begin{subequations}
    \begin{align}
        -\int & q(x_{t}) \log \tilde{p}(x_t)  \dif x_t              \\
              & = -\int q(x_{t}) \int q(y_t \mid x_t) \log q(y_t \mid x_t) \dif y_t \dif x_t             \\
              & = -\int q(y_t \mid x_t) q(x_{t}) \log q(y_t \mid x_t) \dif y_t \dif x_t                  \\
              & = -\int q(y_t, x_{t}) \log \frac{q(y_t, x_t)}{q(x_t)} \dif y_t \dif x_t         \\
              & = -\int q(y_t, x_{t}) \log \frac{q(y_t, x_t)}{q(x_t)} \dif y_t \dif x_t  = \ent{q(y_t, x_t)} - \ent{q(x_t)}.
    \end{align}
\end{subequations}
\end{proof}

\begin{lemma}\label{lemma:p_tilde_u}
    Under the assumption that our posterior distribution factorizes as in \autoref{eq:posterior-factorization}, and $\tilde{p}(u_t) = \exp \left(\ent{q(x_t, x_{t-1}\mid u_t)}  - \ent{q(x_{t-1} \mid u_t)}\right)$ the following identity holds:
    \begin{equation}
        - \int q(u_t) \log \tilde{p}(u_t) \dif u_t = \ent{q(x_{t-1}, u_t)} -\ent{q(x_t, x_{t-1}, u_t)}
    \end{equation}
\end{lemma}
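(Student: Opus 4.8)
The plan is to follow the same template as the proof of \lref{lemma:p_tilde_x}: substitute the closed form of $\tilde{p}(u_t)$, let the logarithm undo the exponential, expand the resulting (pointwise) conditional entropies by definition, and then collapse each $q(u_t)$-weighted term into a joint entropy minus the marginal entropy $\ent{q(u_t)}$ via the chain rule for entropy. The $\ent{q(u_t)}$ contributions will cancel, leaving exactly the claimed expression.

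Concretely, I would first write $\log\tilde{p}(u_t)=\ent{q(x_t,x_{t-1}\mid u_t)}-\ent{q(x_{t-1}\mid u_t)}$, so that
\[
-\int q(u_t)\log\tilde{p}(u_t)\dif u_t
= -\int q(u_t)\,\ent{q(x_t,x_{t-1}\mid u_t)}\dif u_t
\;+\;\int q(u_t)\,\ent{q(x_{t-1}\mid u_t)}\dif u_t .
\]
For the first term I would expand the pointwise conditional entropy using its definition, fold the weight $q(u_t)$ inside to form the joint $q(x_t,x_{t-1},u_t)=q(u_t)\,q(x_t,x_{t-1}\mid u_t)$, and use $\log q(x_t,x_{t-1}\mid u_t)=\log q(x_t,x_{t-1},u_t)-\log q(u_t)$; marginalizing $x_t,x_{t-1}$ out of the second piece yields the identity $\int q(u_t)\,\ent{q(x_t,x_{t-1}\mid u_t)}\dif u_t=\ent{q(x_t,x_{t-1},u_t)}-\ent{q(u_t)}$. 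The identical manipulation on the second term gives $\int q(u_t)\,\ent{q(x_{t-1}\mid u_t)}\dif u_t=\ent{q(x_{t-1},u_t)}-\ent{q(u_t)}$. Substituting both back, the two copies of $\ent{q(u_t)}$ cancel and I am left with $\ent{q(x_{t-1},u_t)}-\ent{q(x_t,x_{t-1},u_t)}$, which is the claim.

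I expect essentially no real obstacle here; the computation is routine bookkeeping. The two points that warrant care are: (i) keeping the distinction between the \emph{pointwise} conditional entropy $\ent{q(\,\cdot\mid u_t)}$ (a function of $u_t$) and its $q(u_t)$-average, since only the averaged quantity obeys the ``joint minus marginal'' identity used above; and (ii) the interchange of integration order when folding $q(u_t)$ inside, which is justified by Tonelli's theorem on the nonnegative integrand. Note also that this identity requires no structural hypothesis on $q$ beyond normalizability, so the posterior factorization \eqref{eq:posterior-factorization} is invoked only to keep the hypotheses aligned with the surrounding proof of \autoref{thm:bfe} and to fix $q(x_t,x_{t-1},u_t)$ and $q(x_{t-1},u_t)$ as the marginals of the assumed variational family.
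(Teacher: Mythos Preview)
Your proposal is correct and follows essentially the same route as the paper: substitute the definition of $\tilde{p}(u_t)$, expand each $q(u_t)$-weighted pointwise conditional entropy into a joint entropy minus $\ent{q(u_t)}$, and cancel the two copies of $\ent{q(u_t)}$. The paper writes out the integrals line by line rather than invoking the chain rule verbally, but the computation is identical.
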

\begin{proof}
    \begin{subequations}
        \begin{align}
            - \int & q(u_t) \log \tilde{p}(u_t) \dif u_t \\
            & = \int q(u_t) \bigg ( \int q(x_t, x_{t-1} \mid u_t) \log q(x_t, x_{t-1} \mid u_t) \dif x_t \dif x_{t-1} \notag \\
            & \qquad - \int q(x_{t-1} \mid u_t) \log q(x_{t-1} \mid u_t) \dif x_{t-1} \bigg ) \dif u_t \\
            & = \int q(u_t) \bigg ( \int \frac{q(x_t, x_{t-1}, u_t)}{q(u_t)} \log \frac{q(x_t, x_{t-1}, u_t)}{q(u_t)} \dif x_t \dif x_{t-1} \notag \\
            & \qquad - \int \frac{q(x_{t-1}, u_t)}{q(u_t)} \log \frac{q(x_{t-1}, u_t)}{q(u_t)} \dif x_{t-1} \bigg ) \dif u_t \\
            & = \int q(x_t, x_{t-1}, u_t) \log \frac{q(x_t, x_{t-1}, u_t)}{q(u_t)} \dif x_t \dif x_{t-1} \dif u_t - \int q(x_{t-1}, u_t) \log \frac{q(x_{t-1}, u_t)}{q(u_t)} \dif x_{t-1} \dif u_t \\
            & = \underbrace{\int q(x_t, x_{t-1}, u_t) \log q(x_t, x_{t-1}, u_t) \dif x_t \dif x_{t-1} \dif u_t}_{-\ent{q(x_t, x_{t-1}, u_t)}} + \underbrace{\int q(x_t, x_{t-1}, u_t) \log \frac{1}{q(u_t)}\dif x_t \dif x_{t-1} \dif u_t}_{\ent{q(u_t)}} \notag \\
            & \qquad - \underbrace{\int q(x_{t-1}, u_t) \log q(x_{t-1}, u_t) \dif x_{t-1}' \dif u_t}_{-\ent{q(x_{t-1}, u_t}} - \underbrace{\int q(x_{t-1}, u_t) \log \frac{1}{q(u_t)}\dif x_t \dif x_{t-1} \dif u_t}_{\ent{q(u_t)}} \\
            & = \ent{q(x_{t-1}, u_t)} -\ent{q(x_t, x_{t-1}, u_t)}
        \end{align}
    \end{subequations}
\end{proof}

\begin{lemma} \label{lemma:p_tilde_x_y}
    Under the assumption that our posterior distribution factorizes as in \autoref{eq:posterior-factorization}, and $\tilde{p}(y_t, x_t) = \exp D_{KL}[q(\theta \mid y_t, x_t) || q(\theta \mid x_t)]$:
    \begin{equation}
        \begin{split}
                - \int  q(y_t,x_t,) \log \tilde{p}(y_t, x_t)  \dif y_t \dif x_t  
                 = \ent{q(y_t, x_t, \theta)} + \ent{q(x_t)} - \ent{q(y_t, x_t)} - \ent{q(x_t, \theta)}
        \end{split}
    \end{equation}
\end{lemma}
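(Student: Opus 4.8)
The plan is to unfold the definition of the observation prior and collapse everything to entropies by repeated use of the chain rule $q(a,b)=q(a)\,q(b\mid a)$, exactly in the style of the proofs of \lref{lemma:p_tilde_x} and \lref{lemma:p_tilde_u}. First I would write
\[
\log \tilde p(y_t,x_t)=D_{KL}\!\left[q(\theta\mid y_t,x_t)\,\|\,q(\theta\mid x_t)\right]
=\int q(\theta\mid y_t,x_t)\,\log\frac{q(\theta\mid y_t,x_t)}{q(\theta\mid x_t)}\,\dif\theta,
\]
so that
\[
-\int q(y_t,x_t)\log\tilde p(y_t,x_t)\,\dif y_t\,\dif x_t
=-\int q(y_t,x_t)\,q(\theta\mid y_t,x_t)\,\log\frac{q(\theta\mid y_t,x_t)}{q(\theta\mid x_t)}\,\dif\theta\,\dif y_t\,\dif x_t,
\]
and then absorb the outer marginal into the conditional, $q(y_t,x_t)\,q(\theta\mid y_t,x_t)=q(y_t,x_t,\theta)$, so the whole expression is a single integral against the joint $q(y_t,x_t,\theta)$.

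Next I would split the logarithm of the ratio into the two additive pieces $-\log q(\theta\mid y_t,x_t)$ and $+\log q(\theta\mid x_t)$, and rewrite each using $q(\theta\mid y_t,x_t)=q(y_t,x_t,\theta)/q(y_t,x_t)$ and $q(\theta\mid x_t)=q(x_t,\theta)/q(x_t)$. The first piece is an integral against $q(y_t,x_t,\theta)$ of $-\log q(y_t,x_t,\theta)+\log q(y_t,x_t)$; since the $\log q(y_t,x_t)$ term only couples to the $(y_t,x_t)$-marginal, it evaluates to $\ent{q(y_t,x_t,\theta)}-\ent{q(y_t,x_t)}$. For the second piece the integrand depends only on $(x_t,\theta)$, so I would marginalize $y_t$ out of $q(y_t,x_t,\theta)$ to get $q(x_t,\theta)$; using again that $\log q(x_t)$ only couples to the $x_t$-marginal, this piece evaluates to $\ent{q(x_t)}-\ent{q(x_t,\theta)}$. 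Summing the two pieces yields exactly $\ent{q(y_t,x_t,\theta)}+\ent{q(x_t)}-\ent{q(y_t,x_t)}-\ent{q(x_t,\theta)}$, the claimed identity.

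The only real obstacle — and it is mild — is the bookkeeping: keeping track of which variables are integrated out when passing from an integral against $q(y_t,x_t,\theta)$ to one against $q(x_t,\theta)$ or $q(x_t)$, and making the signs agree with the convention $\ent{q}=-\int q\log q$. As a sanity check, the right-hand side equals $\ent{q(y_t\mid x_t,\theta)}-\ent{q(y_t\mid x_t)}=-I_q(y_t;\theta\mid x_t)\le 0$, consistent with $-\int q(y_t,x_t)\,D_{KL}[\cdot\|\cdot]\le 0$. Note no appeal to the specific mean-field factorization \eqref{eq:posterior-factorization} is needed beyond well-definedness of the marginals involved; the argument is pure chain rule.
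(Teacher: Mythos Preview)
Your proposal is correct and follows essentially the same approach as the paper: both unfold the KL definition, absorb $q(y_t,x_t)\,q(\theta\mid y_t,x_t)$ into the joint $q(y_t,x_t,\theta)$, rewrite the conditionals as ratios of joints, and read off the four entropy terms. The paper arrives at the four terms in one step whereas you group them into two conditional-entropy pieces first, but the manipulation is identical; your added sanity check $-\int q(y_t,x_t)\,D_{KL}[\cdot\|\cdot]=-I_q(y_t;\theta\mid x_t)\le 0$ is a nice extra.
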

\begin{proof}
    \begin{subequations}
        \begin{align}
           - \int & q(y_t,x_t) \log \tilde{p}(y_t, x_t)  \dif y_t \dif x_t  \\
           &= - \int q(y_t,x_t)\left ( \int q(\theta \mid y_t, x_t) \log \frac{q(\theta \mid y_t, x_t)}{q(\theta \mid  x_t)} \dif \theta \right ) \dif y_t \dif x_t  \\
           &= - \int q(y_t, x_t) \left( \int q(\theta \mid y_t, x_t) \log q(y_t, x_t, \theta) - \log q(y_t, x_t) \dif \theta \right) \dif y_t \dif x_t \notag \\
           &\qquad + \int q(y_t, x_t) \left ( \int q(\theta \mid y_t, x_t) \log q(\theta, x_t) - \log q(x_t) \dif \theta \right ) \dif y_t \dif x_t \\
           & = \underbrace{-\int q(y_t, x_t, \theta) \log q(y_t, x_t, \theta) \dif y_t \dif x_t \dif \theta}_{\ent{q(y_t, x_t, \theta)}} + \underbrace{\int q(y_t, x_t, \theta) \log q(y_t, x_t) \dif y_t \dif x_t \dif \theta}_{-\ent{q(y_t, x_t)}} \notag \\
           &\qquad + \underbrace{\int q(y_t, x_t, \theta) \log q(x_t, \theta) \dif y_t \dif x_t \dif \theta}_{-\ent{q( x_t, \theta)}}  \underbrace{-\int q(y_t, x_t, \theta) \log q(x_t) \dif y_t \dif x_t \dif \theta}_{\ent{q(x_t)}} \\
           &= \ent{q(y_t, x_t, \theta)} - \ent{q(y_t, x_t)} -\ent{q( x_t, \theta)} + \ent{q(x_t)}
        \end{align}
    \end{subequations}
\end{proof}

Now that we have our lemmas in place, we can continue with the proof of \autoref{thm:bfe}.
\begin{proof}
    \begin{subequations}
        \begin{align}
            \tilde{F}_{\tilde{p}}[q] &= \int q(\bm{y}, \bm{x}, \theta, \bm{u}) \log \frac{q(\bm{y}, \bm{x}, \theta, \bm{u})}{\tilde{p}(\bm{y}, \bm{x}, \theta, \bm{u})} \dif \bm{y} \dif \bm{x} \dif \theta \dif \bm{u} \\
            &= \int q(\bm{y}, \bm{x}, \theta, \bm{u}) \log \frac{q(\bm{y}, \bm{x}, \theta, \bm{u})}{p(\bm{y}, \bm{x}, \theta, \bm{u}) \prod_{t=1}^T \tilde{p}(x_t)\tilde{p}(u_t) \tilde{p}(y_t, x_t)} \dif \bm{y} \dif \bm{x} \dif \theta \dif \bm{u} \\
            &= \underbrace{\int q(\bm{y}, \bm{x}, \theta, \bm{u}) \log \frac{q(\bm{y}, \bm{x}, \theta, \bm{u})}{p(\bm{y}, \bm{x}, \theta, \bm{u}) } \dif \bm{y} \dif \bm{x} \dif \theta \dif \bm{u}}_{F_p[q]} \notag \\
            &\qquad + \int q(\bm{y}, \bm{x}, \theta, \bm{u}) \log \frac{1}{\prod_{t=1}^T \tilde{p}(x_t)\tilde{p}(u_t) \tilde{p}(y_t, x_t)} \dif \bm{y} \dif \bm{x} \dif \theta \dif \bm{u} \\
            &= F_p[q] - \sum_{t=1}^T \int q(\bm{y}, \bm{x}, \theta, \bm{u}) \left( \log \tilde{p}(x_t) + \log \tilde{p}(u_t) + \log \tilde{p}(y_t, x_t) \right) \\
            &= F_p[q] + \sum_{t=1}^T \notag \\
            &\qquad - \int q(x_t) \log \tilde{p}(x_t)  \dif x_{t} - \int q( u_t) \log \tilde{p}(u_t) \dif u_t - \int q(y_t, x_t) \log \tilde{p}(y_t, x_t) \dif y_t \dif x_t.
        \end{align}
    \end{subequations}
    Here, we can recognize the identities from \lref{lemma:p_tilde_x}, \lref{lemma:p_tilde_u} and \lref{lemma:p_tilde_x_y}.
    \begin{subequations}
        \begin{align}
            F_p[q] &+ \sum_{t=1}^T - \int q(x_t) \log \tilde{p}(x_t)  \dif x_{t} 
             - \int q( u_t) \log \tilde{p}(u_t) \dif u_t - \int q(y_t, x_t) \log \tilde{p}(y_t, x_t) \dif y_t \dif x_t\\
            &= F_p[q] + \sum_{t=1}^{T} \ent{q(y_t, x_t)} - \ent{q(x_t)} + \ent{q(x_{t-1}, u_t)} -\ent{q(x_t, x_{t-1}, u_t)} \notag \\
            &\qquad + \ent{q(y_t, x_t, \theta)} + \ent{q(x_t)} - \ent{q(y_t, x_t)} - \ent{q(x_t, \theta)} \\
            &= F_p[q] + \sum_{t=1}^{T} \ent{q(x_{t-1}, u_t)} -\ent{q(x_t, x_{t-1}, u_t)}  + \ent{q(y_t, x_t, \theta)}  - \ent{q(x_t, \theta)} 
        \end{align}
    \end{subequations}
\end{proof}

\section{Additional Entropy Terms in Planning Objective} \label{appx:gredilla_entropy_decomp}
The adjusted inference objective in \cite{lazaro-gredilla_what_2024} is phrased as a maximization problem including conditional entropies. In this appendix, we will rephrase the planning-as-inference objective into the vocabulary used in this paper. \cite{lazaro-gredilla_what_2024} phrases their optimization problem as a maximization of a variational bound, whereas we pose the problem as a minimization of the Variational Free Energy. This means the entropy terms in \cite{lazaro-gredilla_what_2024} have their sign flipped, as this entropy term is subtracted from the objective. Note that we also have a slightly different indexing for actions, as $u_t$ leads to $x_{t}$, instead of leading to $x_{t+1}$ as is the notation used by \cite{lazaro-gredilla_what_2024}. Here, we use $\mathbb{H}_q[z \mid \omega] = -\int q(z, \omega) \log q(z \mid \omega) \dif z \dif \omega$ as the notation of the conditional entropy.
\begin{subequations}
    \begin{align}
        \ent{&q(x_0)} + \sum_{t=1}^T \mathbb{H}_{q}[x_t \mid x_{t-1}, u_t] \\
        &=  \ent{q(x_0)} + \sum_{t=1}^T -\int q(x_t, x_{t-1}, u_t) \log q(x_t \mid x_{t-1}, u_t) \dif x_t \dif x_{t-1} \dif u_t \\
        &=  \ent{q(x_0)} + \sum_{t=1}^T -\int q(x_t, x_{t-1}, u_t) \log \frac{q(x_t, x_{t-1}, u_t)}{q(u_t \mid x_{t-1}) q(x_{t-1})} \dif x_t \dif x_{t-1} \dif u_t \\
        &= \ent{q(x_0)} + \sum_{t=1}^T -\int q(x_t, x_{t-1}, u_t) \log \frac{q(x_t, x_{t-1}, u_t)}{q(x_{t-1})} \dif x_t \dif x_{t-1} \dif u_t  \notag \\
        &\qquad + \int q(x_t, x_{t-1}, u_t) \log q(u_t \mid x_{t-1}) \dif x_t \dif x_{t-1} \dif u_t  \\
        &= \ent{q(x_0)} + \sum_{t=1}^T \underbrace{-\int q(x_t, x_{t-1}, u_t) \log q(x_t, u_t\mid x_{t-1} ) \dif x_t \dif x_{t-1} \dif u_t}_{\mathbb{H}_{q}[x_t, u_t\mid x_{t-1} ]}  \notag \\
        &\qquad + \sum_{t=1}^T\int q(x_{t-1}, u_t) \log \frac{q(x_{t-1}, u_t)}{q(x_{t-1})}  \dif x_{t-1} \dif u_t  \\
        &= \underbrace{\ent{q(x_0)} + \sum_{t=1}^T\mathbb{H}_{q}[x_t, u_t\mid x_{t-1} ]}_{\ent{q}} + \sum_{t=1}^T \underbrace{\int q(x_{t-1}, u_t) \log q(x_{t-1}, u_t)  \dif x_{t-1} \dif u_t}_{-\ent{q(x_{t-1}, u_t)}} \notag \\
        &\qquad - \underbrace{\int q(x_{t-1}) \log q(x_{t-1})  \dif x_{t-1} }_{-\ent{q(x_{t-1})}}\\
        &= \ent{q} + \sum_{t=1}^T \ent{q(x_{t-1})} - \ent{q(x_{t-1}, u_t)} \label{eq:gredilla_entropy_final}
    \end{align}
\end{subequations}
Since in our minimization scheme this entropy is subtracted from the objective, we subtract the additional terms in \eqref{eq:gredilla_entropy_final} from the inference objective, and we obtain $\sum_{t=1}^T  \ent{q(x_{t-1}, u_t)} -\ent{q(x_{t-1})}$ as a final additional term.
\section{Consolidating State and Observation Epistemic Priors}\label{section:proof-replacment}
\begin{lemma}\label{lemma:p_tilde_x_xtheta_equivalence}
Under the assumption that our posterior distribution factorizes as in \autoref{eq:posterior-factorization}, with
\[
\tilde{p}(x_t)=\exp\!\left\{-\ent{q(y_t\mid x_t)}\right\},\qquad
\tilde{p}(y_t,x_t)=\exp D_{\mathrm{KL}}\!\big[q(\theta\mid y_t,x_t)\,\|\,q(\theta\mid x_t)\big],
\]
define \(\tilde{p}(x_t,\theta)=\exp\!\left\{-\ent{q(y_t\mid x_t,\theta)}\right\}\). Then
\begin{equation}\label{eq:equivalence-identity}
-\!\int q(x_t)\log\tilde{p}(x_t)\,\dif x_t
\;-\!\int q(y_t,x_t)\log\tilde{p}(y_t,x_t)\,\dif y_t\,\dif x_t
\;=\;
-\!\int q(x_t,\theta)\log\tilde{p}(x_t,\theta)\,\dif x_t\,\dif\theta.
\end{equation}
\end{lemma}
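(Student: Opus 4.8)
The plan is to evaluate the left-hand side of \eqref{eq:equivalence-identity} by directly invoking the two identities already proved in \lref{lemma:p_tilde_x} and \lref{lemma:p_tilde_x_y}, and then to compute the right-hand side from scratch using the definition of the conditional entropy, showing that both sides collapse to the single expression $\ent{q(y_t,x_t,\theta)} - \ent{q(x_t,\theta)}$.

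First I would substitute the two lemmas into the left-hand side. By \lref{lemma:p_tilde_x},
\[
-\int q(x_t)\log\tilde{p}(x_t)\,\dif x_t = \ent{q(y_t,x_t)} - \ent{q(x_t)},
\]
and by \lref{lemma:p_tilde_x_y},
\[
-\int q(y_t,x_t)\log\tilde{p}(y_t,x_t)\,\dif y_t\,\dif x_t = \ent{q(y_t,x_t,\theta)} + \ent{q(x_t)} - \ent{q(y_t,x_t)} - \ent{q(x_t,\theta)}.
\]
Adding these, the $\pm\ent{q(y_t,x_t)}$ and $\pm\ent{q(x_t)}$ terms cancel in pairs, so the left-hand side equals $\ent{q(y_t,x_t,\theta)} - \ent{q(x_t,\theta)}$. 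This is exactly the step where the entire ``state prior'' contribution drops out, which is the point the surrounding text is making.

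Next I would evaluate the right-hand side directly. Since $\log\tilde{p}(x_t,\theta) = -\ent{q(y_t\mid x_t,\theta)}$ is a function of $(x_t,\theta)$ (the conditional entropy here being read as in the definition in \autoref{subsec:manipulated-vfe}, i.e.\ before integrating out the conditioning variables), we have
\[
-\int q(x_t,\theta)\log\tilde{p}(x_t,\theta)\,\dif x_t\,\dif\theta
= \int q(x_t,\theta)\,\ent{q(y_t\mid x_t,\theta)}\,\dif x_t\,\dif\theta.
\]
Unfolding the definition of the conditional entropy and using $q(x_t,\theta)\,q(y_t\mid x_t,\theta) = q(y_t,x_t,\theta)$ (legitimate because the posterior factorization \eqref{eq:posterior-factorization} carries $q(y_t\mid x_t,\theta)$ as an explicit factor), this becomes $-\int q(y_t,x_t,\theta)\log\frac{q(y_t,x_t,\theta)}{q(x_t,\theta)}$, which after splitting the logarithm is $\ent{q(y_t,x_t,\theta)} - \ent{q(x_t,\theta)}$. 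Comparing with the previous paragraph closes the proof.

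The argument is essentially bookkeeping; no idea beyond the two earlier lemmas is required. The only thing to be careful about — and what I expect to be the sole (minor) obstacle — is keeping the conditional-entropy convention straight, so that integrating $\ent{q(y_t\mid x_t,\theta)}$ against $q(x_t,\theta)$ is recognized as a genuine chain-rule identity rather than treated as a constant, together with tracking the indexing of the marginals so that the cancellation on the left-hand side is manifest.
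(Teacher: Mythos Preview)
Your proposal is correct and matches the paper's proof essentially step for step: the paper also reduces the left-hand side to $\ent{q(y_t,x_t,\theta)}-\ent{q(x_t,\theta)}$ by invoking \lref{lemma:p_tilde_x} and \lref{lemma:p_tilde_x_y}, then expands the right-hand side via the conditional-entropy definition and the identity $q(x_t,\theta)\,q(y_t\mid x_t,\theta)=q(y_t,x_t,\theta)$ to obtain the same expression. Your remark about keeping the conditional-entropy convention straight is exactly the right caveat, and no additional idea is needed.
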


\begin{proof}
\begin{subequations}
\begin{align}
S_t
&:= -\!\int q(x_t)\log\tilde{p}(x_t)\,\dif x_t
    \;-\!\int q(y_t,x_t)\log\tilde{p}(y_t,x_t)\,\dif y_t\,\dif x_t \\[2pt]
&= \ent{q(y_t,x_t,\theta)} - \ent{q(x_t,\theta)}\qquad\text{(by Lemmas \ref{lemma:p_tilde_x} and \ref{lemma:p_tilde_x_y})}\\[2pt].
\end{align}
\end{subequations}
On the other hand,
\begin{subequations}
\begin{align}
R_t
&:= -\!\int q(x_t,\theta)\log\tilde{p}(x_t,\theta)\,\dif x_t\,\dif\theta
   = \int q(x_t,\theta)\,\ent{q(y_t\mid x_t,\theta)}\,\dif x_t\,\dif\theta \\[2pt]
&= -\!\int q(y_t,x_t,\theta)\,\log q(y_t\mid x_t,\theta)\,\dif y_t\,\dif x_t\,\dif\theta \\[2pt]
&= -\!\int q(y_t,x_t,\theta)\,\big(\log q(y_t,x_t,\theta)-\log q(x_t,\theta)\big)\,\dif y_t\,\dif x_t\,\dif\theta \\[2pt]
&= \ent{q(y_t,x_t,\theta)} - \ent{q(x_t,\theta)}.
\end{align}
\end{subequations}
Thus \(S_t=R_t\), which proves \eqref{eq:equivalence-identity}.
\end{proof}

\section{Proof of \autoref{thm:update-scheme}}\label{appd:proof-scheme-theorem}

In this appendix, we establish the stationary conditions for the Active Inference message-passing scheme presented in \autoref{thm:update-scheme}. The proof proceeds by deriving the first-order optimality conditions for each coordinate in the adjusted system. 

We begin by expanding the Bethe Free Energy for the generative model \eqref{eq:generative_model} and establishing the necessary consistency constraints in \eqref{eq:local-consistency}. \autoref{subsec:deg-of-marginal-system} demonstrates that the standard region coordinates lead to a degenerate optimization problem (\autoref{thm:obs-degeneracy-aug}), motivating the introduction of the channel variable $r_{y|x\theta,t}$. \autoref{subsec:conditional-adjusted-system} and \autoref{subsec:multiplier-identification} then derive the stationary conditions with respect to each coordinate—the observation factor belief $q_{y,t}$ (\autoref{lem:qy-stationarity-channel}), the channel $r_{y|x\theta,t}$ (\autoref{lem:r-stationarity-channel}), and the dynamics factor belief $q_{\mathrm{dyn},t}$ (\autoref{lem:qdyn-stationarity-min})—as well as the identifications of the Lagrange multipliers $\Lambda_{x\theta}$ (\autoref{lem:Lambda-xth-form}) and $\Lambda_{\mathrm{trip}}$ (\autoref{lem:Lambda-trip-form}).

\autoref{thm:update-scheme} follows directly from these results: equations \eqref{eq:AI-scheme-qy}--\eqref{eq:AI-scheme-Ltrip} are simply the collected stationary conditions established in \autoref{lem:qy-stationarity-channel}--\autoref{lem:Lambda-trip-form} below, expressed in the notation of the main text with the goal priors $\hat{p}_y(y_t)$ explicitly included.

BFE for the model \eqref{eq:generative_model} then reads
\begin{equation}\label{eq:bethe-our-model-updated}
\begin{aligned}
F_{p}^{\mathrm{Bethe}}[q]
&= \KL{q_\theta(\theta)}{p(\theta)} \;+\; \KL{q_{x_0}(x_0)}{p(x_0)} \\[2pt]
&\quad + \sum_{t=1}^T \Big[
\KL{q_{y,t}(y_t,x_t,\theta)}{p(y_t\mid x_t,\theta)}
+ \KL{q_{\mathrm{dyn},t}(x_t,x_{t-1},\theta,u_t)}{p(x_t\mid x_{t-1},\theta,u_t)}\\[-2pt]
&\qquad\qquad\qquad\quad
+ \KL{q_{u,t}(u_t)}{p(u_t)}
+ \KL{q_{x,\hat p,t}(x_t)}{\hat p_x(x_t)}
+ \KL{q_{y,\hat p,t}(y_t)}{\hat p_y(y_t)}
\Big] \\[4pt]
&\quad + (d_{\theta}-1)\,\ent{q_\theta(\theta)}
+ (d_{x_0}-1)\ent{q_{x_0}(x_0)} \\[2pt]
&\quad + \sum_{t=1}^T \Big[
(d_{x_t}-1)\ent{q_{x_t}(x_t)}
+ (d_{y_t}-1)\ent{q_{y_t}(y_t)}
+ (d_{u_t}-1)\ent{q_{u_t}(u_t)}
\Big].
\end{aligned}
\end{equation}

The variable-node degrees implied by \eqref{eq:generative_model} and the factorization above are
\begin{equation}\label{eq:degrees-our-model-updated}
\begin{aligned}
&d_{\theta}=1+2T \quad\text{(prior, $T$ obs, $T$ dyn)},\qquad
d_{x_0}=2 \quad\text{(prior, dyn $t{=}1$)},\\
&d_{y_t}=2 \quad\text{(obs, goal prior on $y_t$)},\qquad
d_{u_t}=2 \quad\text{(action prior, dyn $t$)},\\
&d_{x_t}=
\begin{cases}
4,& 1\le t\le T-1\quad\text{(obs $t$, dyn $t$, dyn $t{+}1$, goal prior on $x_t$)},\\
3,& t=T\quad\text{(obs $T$, dyn $T$, goal prior on $x_T$)}.
\end{cases}
\end{aligned}
\end{equation}

(For unary factors \(f_{\theta}, f_{x_0}, f_{u,t}, f_{x,\hat p,t}, f_{y,\hat p,t}\) we identify the factor belief with the adjacent singleton.)
With this notation, the Bethe Free Energy in \eqref{eq:bethe-our-model-updated} is the specialization of the general BFE in \autoref{subsec:bethe-free-energy} to \eqref{eq:generative_model}.

Local consistency requires that, for every factor \(a\) and every variable \(i\in s_a\),
\[
\int q_a(s_a)\,\mathrm{d}s_{a\setminus i}\;=\;q_i(s_i).
\]
\begin{subequations}\label{eq:local-consistency}
\begin{alignat}{2}
&\textbf{Observation }(y,t):\quad
&&\int q_{y,t}(y_t,x_t,\theta)\,\mathrm{d}x_t\,\mathrm{d}\theta = q_{y_t}(y_t),\\
&&&\int q_{y,t}(y_t,x_t,\theta)\,\mathrm{d}y_t\,\mathrm{d}\theta = q_{x_t}(x_t),\\
&&&\int q_{y,t}(y_t,x_t,\theta)\,\mathrm{d}y_t\,\mathrm{d}x_t = q_{\theta}(\theta).
\\[4pt]
&\textbf{Dynamics }(\mathrm{dyn},t):\quad
&&\int q_{\mathrm{dyn},t}(x_t,x_{t-1},\theta,u_t)\,\mathrm{d}x_{t-1}\,\mathrm{d}u_t\,\mathrm{d}\theta = q_{x_t}(x_t),\\
&&&\int q_{\mathrm{dyn},t}(x_t,x_{t-1},\theta,u_t)\,\mathrm{d}x_t\,\mathrm{d}u_t\,\mathrm{d}\theta = q_{x_{t-1}}(x_{t-1}),\\
&&&\int q_{\mathrm{dyn},t}(x_t,x_{t-1},\theta,u_t)\,\mathrm{d}x_t\,\mathrm{d}x_{t-1}\,\mathrm{d}\theta = q_{u_t}(u_t),\\
&&&\int q_{\mathrm{dyn},t}(x_t,x_{t-1},\theta,u_t)\,\mathrm{d}x_t\,\mathrm{d}x_{t-1}\,\mathrm{d}u_t = q_{\theta}(\theta).
\\[4pt]
&\textbf{Unary priors/goals}:\quad
&&q_{u,t}(u_t)=q_{u_t}(u_t),\quad
q_{x,\hat p,t}(x_t)=q_{x_t}(x_t),\quad
q_{y,\hat p,t}(y_t)=q_{y_t}(y_t).
\end{alignat}
\end{subequations}
(For unary factors we identify the factor belief with the adjacent singleton belief; the equality is enforced by the corresponding consistency constraint.)

\subsection{Degeneracy of the marginal scheme}\label{subsec:deg-of-marginal-system}

\begin{theorem}[Degeneracy persists under the augmented region coordinates]\label{thm:obs-degeneracy-aug}
Fix \(t\). Consider the Bethe-form objective specialized to \eqref{eq:generative_model}, augmented by the Active Inference entropic correction
\[
+\;\ent{q(y_t,x_t,\theta)}\;-\;\ent{q(x_t,\theta)}\;+\;\ent{q(x_{t-1},u_t)}\;-\;\ent{q(x_t,x_{t-1},u_t)}.
\]
Introduce the auxiliary region beliefs $q_{\mathrm{sep},t}(x_t,\theta)$ (defined in \autoref{eq:aux-beliefs}) with the consistency constraints
\begin{subequations}\label{eq:obs-block-constraints}
\begin{align}
&\int q_{y,t}(y_t,x_t,\theta)\,\mathrm{d}x_t\,\mathrm{d}\theta = q_{y_t}(y_t), \label{eq:obs-y-cons}\\
&\int q_{y,t}(y_t,x_t,\theta)\,\mathrm{d}y_t = q_{\mathrm{sep},t}(x_t,\theta). \label{eq:obs-sep-cons}
\end{align}
\end{subequations}
(Region beliefs \(q_{\mathrm{pair},t}\) and \(q_{\mathrm{trip},t}\) are tied to the dynamics block and do not appear in \eqref{eq:obs-block-constraints}.) Then any stationary point of the Lagrangian with respect to the observation-factor belief \(q_{y,t}(y_t,x_t,\theta)\) satisfies
\begin{equation}\label{eq:obs-stationarity-sep}
-\log p(y_t\mid x_t,\theta)\;+\;\lambda_y(y_t)\;+\;\lambda_{\mathrm{sep}}(x_t,\theta)\;=\;0,
\end{equation}
for some multipliers \(\lambda_y(\cdot)\), \(\lambda_{\mathrm{sep}}(\cdot,\cdot)\).
Consequently:
\begin{enumerate}
\item If \(p(y_t\mid x_t,\theta)\) is not separable as \(a_t(y_t)\,b_t(x_t,\theta)\), the system is infeasible (no interior solution for \(q_{y,t}\)).
\item If \(p(y_t\mid x_t,\theta)=a_t(y_t)\,b_t(x_t,\theta)\) is separable, the observation block is affine in \(q_{y,t}\) (its second variation w.r.t.\ \(q_{y,t}\) vanishes), hence stationary points are non-unique (a flat face of the feasible polytope).
\end{enumerate}
\end{theorem}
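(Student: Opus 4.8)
The plan is to isolate the part of the (region-augmented) Bethe-form objective that depends on $q_{y,t}$, form the Lagrangian with the two consistency constraints \eqref{eq:obs-y-cons} and \eqref{eq:obs-sep-cons}, and take the functional derivative. First I would collect the $q_{y,t}$-dependent terms: the KL term $\KL{q_{y,t}}{p(y_t\mid x_t,\theta)}$ from \eqref{eq:bethe-our-model-updated}, the entropic-correction term $+\ent{q_{y,t}(y_t,x_t,\theta)}$, and nothing else directly (the $-\ent{q_{\mathrm{sep},t}}$ term depends on $q_{y,t}$ only through the constraint \eqref{eq:obs-sep-cons}, which I handle via its multiplier). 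The key cancellation is that $\KL{q_{y,t}}{p} = \int q_{y,t}\log q_{y,t} - \int q_{y,t}\log p = -\ent{q_{y,t}} - \int q_{y,t}\log p$, so adding $+\ent{q_{y,t}}$ kills the negative-entropy (and hence the $\int q_{y,t}\log q_{y,t}$) contribution entirely, leaving only the \emph{linear} functional $-\int q_{y,t}\log p(y_t\mid x_t,\theta)$ in $q_{y,t}$, plus linear Lagrange-multiplier terms $\int \lambda_y(y_t)\,(\ldots) + \int\lambda_{\mathrm{sep}}(x_t,\theta)\,(\ldots)$ and the normalization multiplier. Setting $\partial/\partial q_{y,t}=0$ then gives precisely \eqref{eq:obs-stationarity-sep} (absorbing the normalization constant into $\lambda_y$ or $\lambda_{\mathrm{sep}}$).

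For consequence~(1), I would observe that \eqref{eq:obs-stationarity-sep} forces $\log p(y_t\mid x_t,\theta) = \lambda_y(y_t) + \lambda_{\mathrm{sep}}(x_t,\theta)$ pointwise on the support, i.e.\ $p(y_t\mid x_t,\theta)$ factors as $e^{\lambda_y(y_t)}\,e^{\lambda_{\mathrm{sep}}(x_t,\theta)}$; if no such separable representation $a_t(y_t)b_t(x_t,\theta)$ exists, the stationarity equation has no solution and there is no interior critical point for $q_{y,t}$. For consequence~(2), when $p(y_t\mid x_t,\theta)=a_t(y_t)b_t(x_t,\theta)$ is separable, I would note that the $q_{y,t}$-block of the objective is exactly $-\int q_{y,t}(\log a_t + \log b_t)$ — an affine functional — whose second variation in $q_{y,t}$ vanishes; since the feasible set cut out by the linear constraints \eqref{eq:obs-block-constraints} is a polytope (an affine slice of the simplex), an affine objective is minimized on an entire face, so stationary points are non-unique. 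I should also remark that $\ent{q_{\mathrm{sep},t}}$ is strictly concave in $q_{\mathrm{sep},t}$, but since $q_{\mathrm{sep},t}$ is pinned to $q_{y,t}$ only through a linear marginalization and $q_{y,t}$ itself is free along the face, this does not restore uniqueness in the $q_{y,t}$ coordinate — the flatness is genuinely in the observation-factor belief.

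The main obstacle is bookkeeping rather than conceptual: I must be careful that no other term in \eqref{eq:bethe-our-model-updated} or in the entropic correction secretly depends on $q_{y,t}$ once the region coordinates $q_{\mathrm{sep},t},q_{\mathrm{pair},t},q_{\mathrm{trip},t}$ are treated as independent free variables tied only by linear constraints. In particular I must justify that the correction term $-\ent{q_{\mathrm{sep},t}}$ (and likewise the dynamics-block terms) contribute to the $q_{y,t}$-stationarity equation \emph{only} through the multiplier $\lambda_{\mathrm{sep}}$ of \eqref{eq:obs-sep-cons}, which is the whole point of promoting $q_{\mathrm{sep},t}$ to an independent coordinate. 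Once that separation of variables is cleanly stated, the computation collapses as above. A secondary subtlety in consequence~(2) is the phrase ``second variation vanishes'': I would state it as the Hessian of the $q_{y,t}$-block being identically zero on the constraint tangent space, so that the set of minimizers is a nonempty face of positive dimension (generically), which is exactly the claimed non-uniqueness.
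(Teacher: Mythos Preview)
Your proposal is correct and follows essentially the same route as the paper: isolate the $q_{y,t}$-dependent part of the objective, observe that the KL term's $\int q_{y,t}\log q_{y,t}$ cancels against the entropic correction $+\ent{q_{y,t}}$ to leave the linear functional $-\int q_{y,t}\log p(y_t\mid x_t,\theta)$, add the two multiplier terms, and read off \eqref{eq:obs-stationarity-sep}; then exponentiate to get the separability criterion for~(1), and invoke the absence of curvature (affine objective, zero second variation) for~(2). Your extra care about why $-\ent{q_{\mathrm{sep},t}}$ enters only through $\lambda_{\mathrm{sep}}$ once $q_{\mathrm{sep},t}$ is an independent coordinate is exactly the right bookkeeping and matches the paper's implicit treatment.
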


\begin{proof}
The \(q_{y,t}\)-dependent part of the objective is
\[
\int q_{y,t}\log\frac{q_{y,t}}{p(y_t\mid x_t,\theta)}\;+\;\ent{q(y_t,x_t,\theta)},
\]
since \(-\ent{q(x_t,\theta)}\), \(+\ent{q(x_{t-1},u_t)}\), and \(-\ent{q(x_t,x_{t-1},u_t)}\) do not depend on \(q_{y,t}\).
The \(\int q_{y,t}\log q_{y,t}\) from the KL term cancels exactly with \(+\ent{q(y_t,x_t,\theta)}\), leaving the linear functional \(-\int q_{y,t}\log p(y_t\mid x_t,\theta)\).
Add constraints \eqref{eq:obs-y-cons}–\eqref{eq:obs-sep-cons} with multipliers \(\lambda_y(\cdot)\), \(\lambda_{\mathrm{sep}}(\cdot,\cdot)\).
Taking the functional derivative yields \eqref{eq:obs-stationarity-sep}.
Exponentiating shows that a solution exists only if \(p(y_t\mid x_t,\theta)\propto e^{\lambda_y(y_t)}e^{\lambda_{\mathrm{sep}}(x_t,\theta)}\), i.e., it factorizes as \(a_t(y_t)b_t(x_t,\theta)\).
In the separable case, the absence of a \(\int q_{y,t}\log q_{y,t}\) term implies zero curvature in the \(q_{y,t}\)-direction and thus non-uniqueness; otherwise the system is infeasible. 
\end{proof}

\paragraph{Consequence.}
\autoref{thm:obs-degeneracy-aug} shows that even after introducing the augmented region coordinates \((q_{\mathrm{sep},t},q_{\mathrm{pair},t},q_{\mathrm{trip},t})\), the Active Inference correction leaves the observation block degenerate: feasibility requires a separable likelihood in \(y_t\) and \((x_t,\theta)\), and otherwise the block is flat.

\subsection{Stationary conditions in the condititonal adjusted system}\label{subsec:conditional-adjusted-system}

\begin{lemma}[Stationary condition for the observation factor with channel augmentation]\label{lem:qy-stationarity-channel}
Fix \(t\) and introduce the channel \(r_{y\mid x\theta,t}(y_t\mid x_t,\theta)\) with the normalization
\(\int r_{y\mid x\theta,t}(y_t\mid x_t,\theta)\,\mathrm{d}y_t=1\).
Assume the separator consistency on the overlap \((x_t,\theta)\),
\[
\int q_{y,t}(y_t,x_t,\theta)\,\mathrm{d}y_t
\;=\;
q_{x\theta}(x_t,\theta)
\;=\;
\int q_{\mathrm{dyn},t}(x_t,x_{t-1},\theta,u_t)\,\mathrm{d}x_{t-1}\mathrm{d}u_t,
\]
and the \(y_t\)-singleton consistency
\[
\int q_{y,t}(y_t,x_t,\theta)\,\mathrm{d}x_t\,\mathrm{d}\theta
\;=\;
q_{y_t}(y_t).
\]
Consider the observation-block objective (holding \(r_{y\mid x\theta,t}\) fixed):
\[
\int q_{y,t}\log\frac{q_{y,t}}{p(y_t\mid x_t,\theta)}\,
+\,
\int q_{y,t}\bigl[-\log r_{y\mid x\theta,t}(y_t\mid x_t,\theta)\bigr],
\]
plus Lagrange terms for the two consistency constraints. Then the stationarity
with respect to \(q_{y,t}(y_t,x_t,\theta)\) is
\begin{equation}\label{eq:qy-stationarity}
\log q_{y,t}(y_t,x_t,\theta)\;-\;\log p(y_t\mid x_t,\theta)\;-\;\log r_{y\mid x\theta,t}(y_t\mid x_t,\theta)
\;+\;\lambda_y(y_t)\;+\;\lambda_{x\theta}(x_t,\theta)\;=\;0,
\end{equation}
for some multipliers \(\lambda_y(\cdot)\) and \(\lambda_{x\theta}(\cdot,\cdot)\). Equivalently,
\begin{equation}\label{eq:qy-proportional}
q_{y,t}(y_t,x_t,\theta)
\;\propto\;
p(y_t\mid x_t,\theta)\;r_{y\mid x\theta,t}(y_t\mid x_t,\theta)\;
\exp\!\bigl\{-\lambda_y(y_t)\bigr\}\;
\exp\!\bigl\{-\lambda_{x\theta}(x_t,\theta)\bigr\},
\end{equation}
with \(\lambda_y,\lambda_{x\theta}\) chosen to satisfy the two projection constraints above.
\end{lemma}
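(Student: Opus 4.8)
The plan is to compute the first-order stationarity condition of the observation-block Lagrangian with respect to the factor belief $q_{y,t}$ by standard variational calculus on the Bethe manifold. First I would isolate the terms in the augmented Bethe objective that actually depend on $q_{y,t}$. From the KL term $\KL{q_{y,t}}{p(y_t\mid x_t,\theta)}$ we get $\int q_{y,t}\log q_{y,t} - \int q_{y,t}\log p(y_t\mid x_t,\theta)$; the entropic correction contributes $+\ent{q(y_t,x_t,\theta)} = -\int q_{y,t}\log q_{y,t}$, which cancels the entropy part of the KL exactly — this is the same cancellation already exploited in the proof of \autoref{thm:obs-degeneracy-aug}. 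After the $r$-reparameterization, however, the conditional-entropy piece $-\ent{q(x_t,\theta)}$ that previously vanished from the $q_{y,t}$-block has been rewritten as the cross-entropy $\mathbb{E}_{q_{y,t}}[-\log r_{y\mid x\theta,t}]$, which \emph{does} depend on $q_{y,t}$. So the $q_{y,t}$-dependent functional is $\int q_{y,t}\log q_{y,t} - \int q_{y,t}\log p(y_t\mid x_t,\theta) - \int q_{y,t}\log r_{y\mid x\theta,t}$ — note the $\log q_{y,t}$ term survives now, which is precisely what cures the degeneracy of \autoref{thm:obs-degeneracy-aug}.

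Next I would adjoin the two relevant consistency constraints as Lagrange multipliers: the $y_t$-marginalization constraint $\int q_{y,t}\,\mathrm{d}x_t\,\mathrm{d}\theta = q_{y_t}(y_t)$ with multiplier function $\lambda_y(y_t)$, and the separator constraint $\int q_{y,t}\,\mathrm{d}y_t = q_{x\theta}(x_t,\theta)$ with multiplier function $\lambda_{x\theta}(x_t,\theta)$. (The normalization of $q_{y,t}$ follows from the $y_t$-marginalization constraint composed with normalization of $q_{y_t}$, so it need not be carried separately, or can be absorbed into $\lambda_y$.) Taking the functional derivative $\partial/\partial q_{y,t}(y_t,x_t,\theta)$ of the Lagrangian and setting it to zero gives
\[
\log q_{y,t}(y_t,x_t,\theta) + 1 - \log p(y_t\mid x_t,\theta) - \log r_{y\mid x\theta,t}(y_t\mid x_t,\theta) + \lambda_y(y_t) + \lambda_{x\theta}(x_t,\theta) = 0,
\]
and absorbing the additive constant $1$ into a redefinition of $\lambda_y$ yields exactly \eqref{eq:qy-stationarity}. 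Exponentiating gives the proportionality \eqref{eq:qy-proportional}, with $\lambda_y$ and $\lambda_{x\theta}$ pinned down by back-substituting into the two projection constraints.

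The only subtlety I anticipate — and it is more bookkeeping than obstacle — is being careful that $r_{y\mid x\theta,t}$ is held fixed in this block (the lemma says so explicitly), so its own normalization constraint and its coupling to $q_{\mathrm{sep},t}$ do not enter the $q_{y,t}$-derivative; those are handled separately in \autoref{lem:r-stationarity-channel}. I would also note that since the surviving $\int q_{y,t}\log q_{y,t}$ term makes the block strictly convex in $q_{y,t}$, the stationary point is unique given the multipliers, in contrast to the flat/infeasible situation of \autoref{thm:obs-degeneracy-aug}; this is worth a one-line remark but is not needed for the stated identity. Finally I would check that the multipliers can indeed be chosen to meet both constraints: integrating \eqref{eq:qy-proportional} over $(x_t,\theta)$ determines $e^{-\lambda_y(y_t)}$ in terms of $q_{y_t}$ and the rest, and integrating over $y_t$ (using $\int r_{y\mid x\theta,t}\,\mathrm{d}y_t = 1$) determines $e^{-\lambda_{x\theta}(x_t,\theta)}$ in terms of $q_{x\theta}$ — this self-consistent pair of equations is what the phrase ``chosen to satisfy the two projection constraints'' refers to, and its solvability is the standard Bethe/BP fixed-point existence statement rather than something to prove from scratch here.
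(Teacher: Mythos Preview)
Your core argument---form the partial Lagrangian with the two projection multipliers, take the functional derivative of $\int q_{y,t}\log q_{y,t} - \int q_{y,t}\log p - \int q_{y,t}\log r$, absorb the $+1$ into a multiplier, and exponentiate---is exactly the paper's proof, and your remarks on redundancy of the normalization constraint and strict convexity are correct side observations.

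One correction to your preamble accounting: under the $r$-reparameterization it is not the term $-\ent{q(x_t,\theta)}$ alone that is replaced by the cross-entropy, but the \emph{combined} conditional entropy $\ent{q(y_t,x_t,\theta)}-\ent{q(x_t,\theta)}=H[q(y_t\mid x_t,\theta)]$. If only $-\ent{q(x_t,\theta)}$ were swapped out and $+\ent{q(y_t,x_t,\theta)}$ remained, the latter would still cancel the KL's $\int q_{y,t}\log q_{y,t}$ and you would again lose curvature. The $\int q_{y,t}\log q_{y,t}$ survives precisely because the $+\ent{q(y_t,x_t,\theta)}$ term is absorbed into the conditional form along with $-\ent{q(x_t,\theta)}$. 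Since the lemma hands you the objective explicitly, this slip does not affect your actual derivation, but it is worth getting straight for the surrounding narrative.
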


\begin{proof}
Form the partial Lagrangian (omitting the redundant normalization of \(q_{y,t}\)):
\begin{align*}
\mathcal{L}[q_{y,t}]
&=\int q_{y,t}\log\frac{q_{y,t}}{p(y_t\mid x_t,\theta)}\;+\;\int q_{y,t}\bigl[-\log r_{y\mid x\theta,t}\bigr]\\
&\quad+\int \lambda_y(y_t)\!\left(\int q_{y,t}\,\mathrm{d}x_t\,\mathrm{d}\theta - q_{y_t}(y_t)\right)\mathrm{d}y_t\\
&\quad+\iint \lambda_{x\theta}(x_t,\theta)\!\left(\int q_{y,t}\,\mathrm{d}y_t - q_{x\theta}(x_t,\theta)\right)\mathrm{d}x_t\,\mathrm{d}\theta.
\end{align*}
Taking the functional derivative w.r.t.\ \(q_{y,t}\) and setting it to zero yields
\[
\log q_{y,t}-\log p(y_t\mid x_t,\theta)-\log r_{y\mid x\theta,t}
+1+\lambda_y(y_t)+\lambda_{x\theta}(x_t,\theta)=0,
\]
where the constant \(+1\) can be absorbed into either multiplier. This is \eqref{eq:qy-stationarity}. Exponentiating gives \eqref{eq:qy-proportional}, and the multipliers are determined by enforcing the two linear projection constraints. 
\end{proof}

\begin{lemma}[Stationary condition for the observation channel]\label{lem:r-stationarity-channel}
Fix \(t\). Let the channel \(r_{y\mid x\theta,t}(y_t\mid x_t,\theta)\) satisfy the row–normalization
\begin{equation}\label{eq:r-normalization}
\int r_{y\mid x\theta,t}(y_t\mid x_t,\theta)\,\mathrm{d}y_t = 1\qquad\text{for all }(x_t,\theta),
\end{equation}
and assume separator consistency on the overlap \((x_t,\theta)\):
\begin{equation}\label{eq:sep-consistency}
\int q_{y,t}(y_t,x_t,\theta)\,\mathrm{d}y_t \;=\; q_{\mathrm{sep},t}(x_t,\theta).
\end{equation}
Consider the channel objective (holding \(q_{y,t}\) and \(q_{\mathrm{sep},t}\) fixed)
\begin{equation}\label{eq:channel-objective}
\mathcal{J}[r]
\;=\;
\int q_{y,t}(y_t,x_t,\theta)\,\bigl[-\log r_{y\mid x\theta,t}(y_t\mid x_t,\theta)\bigr]\,
\mathrm{d}y_t\,\mathrm{d}x_t\,\mathrm{d}\theta,
\end{equation}
subject to \eqref{eq:r-normalization}. Then a stationary point is given pointwise by
\begin{equation}\label{eq:r-star}
r^{\star}_{y\mid x\theta,t}(y_t\mid x_t,\theta)
\;=\;
\frac{q_{y,t}(y_t,x_t,\theta)}{q_{\mathrm{sep},t}(x_t,\theta)}
\qquad\text{whenever }q_{\mathrm{sep},t}(x_t,\theta)>0,
\end{equation}
with the convention that if \(q_{\mathrm{sep},t}(x_t,\theta)=0\), the entire row
\(r_{y\mid x\theta,t}(\cdot\mid x_t,\theta)\) can be chosen arbitrarily as any
probability distribution (it does not affect \(\mathcal{J}\)).
\end{lemma}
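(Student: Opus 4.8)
The plan is to exploit the fact that the normalization constraint \eqref{eq:r-normalization} is itself indexed by $(x_t,\theta)$, so the minimization of \eqref{eq:channel-objective} decouples across rows and may be carried out pointwise in $(x_t,\theta)$. First I would introduce a Lagrange multiplier function $\mu(x_t,\theta)$ enforcing \eqref{eq:r-normalization} and form, with $q_{y,t}$ and $q_{\mathrm{sep},t}$ held fixed,
\[
\mathcal{L}[r]=\int q_{y,t}(y_t,x_t,\theta)\,\bigl[-\log r_{y\mid x\theta,t}(y_t\mid x_t,\theta)\bigr]\,\mathrm{d}y_t\,\mathrm{d}x_t\,\mathrm{d}\theta+\int \mu(x_t,\theta)\Bigl(\int r_{y\mid x\theta,t}(y_t\mid x_t,\theta)\,\mathrm{d}y_t-1\Bigr)\mathrm{d}x_t\,\mathrm{d}\theta.
\]

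Next I would take the functional derivative of $\mathcal{L}$ with respect to $r_{y\mid x\theta,t}(y_t\mid x_t,\theta)$ and set it to zero, obtaining the pointwise relation $-\,q_{y,t}(y_t,x_t,\theta)/r_{y\mid x\theta,t}(y_t\mid x_t,\theta)+\mu(x_t,\theta)=0$, i.e. $r_{y\mid x\theta,t}(y_t\mid x_t,\theta)=q_{y,t}(y_t,x_t,\theta)/\mu(x_t,\theta)$ on every row where $\mu(x_t,\theta)\neq 0$. Integrating this identity over $y_t$ and invoking \eqref{eq:r-normalization} pins the multiplier to $\mu(x_t,\theta)=\int q_{y,t}(y_t,x_t,\theta)\,\mathrm{d}y_t$, which by the separator-consistency assumption \eqref{eq:sep-consistency} is exactly $q_{\mathrm{sep},t}(x_t,\theta)$; substituting back yields \eqref{eq:r-star}.

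To finish I would dispose of the rows with $q_{\mathrm{sep},t}(x_t,\theta)=0$: there $q_{y,t}(\cdot,x_t,\theta)$ is a nonnegative function with vanishing $y_t$-integral, hence zero for almost every $y_t$, so the integrand $q_{y,t}\,[-\log r]$ contributes nothing on that row and any probability distribution may be chosen for $r_{y\mid x\theta,t}(\cdot\mid x_t,\theta)$, as stated. I would also remark that, although the lemma only claims stationarity, $r\mapsto-\log r$ is convex, so $\mathcal{J}$ is convex along each row's simplex, and the gap admits the closed form $\mathcal{J}[r]-\mathcal{J}[r^{\star}]=\int q_{\mathrm{sep},t}(x_t,\theta)\,\KL{q_{y,t}(\cdot\mid x_t,\theta)}{r_{y\mid x\theta,t}(\cdot\mid x_t,\theta)}\,\mathrm{d}x_t\,\mathrm{d}\theta\ge 0$ by Gibbs' inequality, so $r^{\star}$ is in fact the global minimizer.

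The computation is routine; the only steps needing genuine care — and the ones a careful referee would probe — are justifying that the $(x_t,\theta)$-indexed constraint legitimately reduces the problem to independent per-row minimizations (so the functional derivative may be taken under the integral sign), and the handling of the zero set of $q_{\mathrm{sep},t}$, where one must argue that $q_{y,t}$ also vanishes so the cross-entropy objective is genuinely indifferent to the value of $r$ there. Neither is a real obstacle, but the second is the step most easily glossed over.
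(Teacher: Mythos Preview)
Your proposal is correct and mirrors the paper's proof essentially step for step: form the Lagrangian with a row-indexed multiplier, take the functional derivative to get $r=q_{y,t}/\mu$, fix $\mu=q_{\mathrm{sep},t}$ via the normalization and separator consistency, and dispose of the zero-measure rows by observing that $q_{y,t}$ vanishes there. Your closing remark that the stationary point is in fact a global minimizer (via the KL gap) is a small addition the paper does not make, but the core argument is the same.
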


\begin{proof}
Form the Lagrangian with a row multiplier \(\lambda(x_t,\theta)\) enforcing \eqref{eq:r-normalization}:
\[
\mathcal{L}[r]
=
\int q_{y,t}(y_t,x_t,\theta)\,\bigl[-\log r(y_t\mid x_t,\theta)\bigr]
+\iint \lambda(x_t,\theta)\!\left(\int r(y_t\mid x_t,\theta)\,\mathrm{d}y_t -1\right)\mathrm{d}x_t\,\mathrm{d}\theta.
\]
Taking the functional derivative and setting it to zero yields, pointwise in \((y_t,x_t,\theta)\),
\[
-\frac{q_{y,t}(y_t,x_t,\theta)}{r^\star(y_t\mid x_t,\theta)}+\lambda(x_t,\theta)=0
\quad\Rightarrow\quad
r^\star(y_t\mid x_t,\theta)=\frac{q_{y,t}(y_t,x_t,\theta)}{\lambda(x_t,\theta)}.
\]
Imposing the row–normalization \eqref{eq:r-normalization} and using \eqref{eq:sep-consistency},
\[
1=\int r^\star(y_t\mid x_t,\theta)\,\mathrm{d}y_t
=\frac{\int q_{y,t}(y_t,x_t,\theta)\,\mathrm{d}y_t}{\lambda(x_t,\theta)}
=\frac{q_{\mathrm{sep},t}(x_t,\theta)}{\lambda(x_t,\theta)},
\]
so \(\lambda(x_t,\theta)=q_{\mathrm{sep},t}(x_t,\theta)\), giving \eqref{eq:r-star}. If \(q_{\mathrm{sep},t}(x_t,\theta)=0\), then the row of \(q_{y,t}\) is identically zero and \(\mathcal{J}\) is unaffected by the choice of \(r(\cdot\mid x_t,\theta)\).
\end{proof}

\begin{lemma}[Stationary condition for the dynamics factor (minimal projections)]\label{lem:qdyn-stationarity-min}
Fix \(t\). Let \(q_{\mathrm{dyn},t}(x_t,x_{t-1},u_t,\theta)\) be the dynamics–factor belief and introduce the region beliefs
\(q_{x\theta,t}(x_t,\theta)\) and \(q_{\mathrm{trip},t}(x_t,x_{t-1},u_t)\) with projection constraints
\begin{subequations}\label{eq:qdyn-projections-min}
\begin{align}
&\int q_{\mathrm{dyn},t}(x_t,x_{t-1},\theta,u_t)\,\mathrm{d}x_{t-1}\,\mathrm{d}u_t
= q_{x\theta,t}(x_t,\theta), \label{eq:qdyn-to-xth-min}\\
&\int q_{\mathrm{dyn},t}(x_t,x_{t-1},\theta,u_t)\,\mathrm{d}\theta
= q_{\mathrm{trip},t}(x_t,x_{t-1},u_t). \label{eq:qdyn-to-trip-min}
\end{align}
\end{subequations}
(Separately, enforce the pair–trip relation \(\int q_{\mathrm{trip},t}(x_t,x_{t-1},u_t)\,\mathrm{d}x_t = q_{\mathrm{pair},t}(x_{t-1},u_t)\) in the \(q_{\mathrm{trip},t}\) block.)
The \(q_{\mathrm{dyn},t}\)-dependent part of the objective is the KL term
\[
\int q_{\mathrm{dyn},t}\,\log\frac{q_{\mathrm{dyn},t}}{p(x_t\mid x_{t-1},\theta,u_t)}\,\mathrm{d}x_t\,\mathrm{d}x_{t-1}\,\mathrm{d}u_t\,\mathrm{d}\theta,
\]
while the entropic corrections only involve the region beliefs.
Form the partial Lagrangian with multipliers \(\Lambda_{x\theta}(x_t,\theta)\) and \(\Lambda_{\mathrm{trip}}(x_t,x_{t-1},u_t)\) enforcing \eqref{eq:qdyn-to-xth-min}–\eqref{eq:qdyn-to-trip-min}.
Then the stationarity w.r.t.\ \(q_{\mathrm{dyn},t}\) is
\begin{equation}\label{eq:qdyn-stationarity-min-eq}
\log q_{\mathrm{dyn},t}(x_t,x_{t-1},\theta,u_t)
-\log p(x_t\mid x_{t-1},\theta,u_t)
+\Lambda_{x\theta}(x_t,\theta)
+\Lambda_{\mathrm{trip}}(x_t,x_{t-1},u_t)=0,
\end{equation}
up to an additive constant, hence
\begin{equation}\label{eq:qdyn-proportional-min}
q_{\mathrm{dyn},t}(x_t,x_{t-1},\theta,u_t)
\;\propto\;
p(x_t\mid x_{t-1},\theta,u_t)\,
\exp\!\big\{-\Lambda_{x\theta}(x_t,\theta)\big\}\,
\exp\!\big\{-\Lambda_{\mathrm{trip}}(x_t,x_{t-1},u_t)\big\}.
\end{equation}
The multipliers are determined by the two projection constraints \eqref{eq:qdyn-projections-min}, while the consistency
\(\int q_{\mathrm{trip},t}\,\mathrm{d}x_t = q_{\mathrm{pair},t}\) is enforced in the \(q_{\mathrm{trip},t}\) variation and does not appear in \eqref{eq:qdyn-proportional-min}.
\end{lemma}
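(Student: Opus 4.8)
The plan is a direct application of constrained variational calculus to the single coordinate $q_{\mathrm{dyn},t}$, in the spirit of the proofs of \autoref{lem:qy-stationarity-channel} and \autoref{lem:r-stationarity-channel}. First I would identify the part of the augmented objective that actually depends on $q_{\mathrm{dyn},t}$. In the $r$-adjusted coordinate system the objective is $F_{p}^{\mathrm{Bethe}}[q]$ of \eqref{eq:bethe-our-model-updated} together with the corrections $\sum_{t}\big(\mathbb{E}_{q_{y,t}}[-\log r_{y\mid x\theta,t}] + \ent{q_{\mathrm{pair},t}} - \ent{q_{\mathrm{trip},t}}\big)$, the separator term having already been folded into the conditional form as explained above \autoref{thm:update-scheme}. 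Scanning the summands: among the relative-entropy terms of \eqref{eq:bethe-our-model-updated} only $\KL{q_{\mathrm{dyn},t}}{p(x_t\mid x_{t-1},\theta,u_t)}$ contains $q_{\mathrm{dyn},t}$; the singleton-entropy terms $(d_i-1)\ent{q_i}$ involve only edge beliefs; and each entropic-correction term is a functional of $q_{y,t}$, $r_{y\mid x\theta,t}$, $q_{\mathrm{pair},t}$, or $q_{\mathrm{trip},t}$ — coordinates that the region-extended system treats as \emph{independent} variables whose only link to $q_{\mathrm{dyn},t}$ is carried by the projection constraints \eqref{eq:qdyn-projections-min}. Hence, with all other coordinates held fixed, the $q_{\mathrm{dyn},t}$-dependence of the Lagrangian collapses to the one relative-entropy term plus the two projection-multiplier terms.

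Second, I would form the partial Lagrangian
\begin{align*}
\mathcal{L}[q_{\mathrm{dyn},t}]
&= \int q_{\mathrm{dyn},t}\log\frac{q_{\mathrm{dyn},t}}{p(x_t\mid x_{t-1},\theta,u_t)}\,\mathrm{d}x_t\,\mathrm{d}x_{t-1}\,\mathrm{d}\theta\,\mathrm{d}u_t \\
&\quad+ \int \Lambda_{x\theta}(x_t,\theta)\Big(\!\int q_{\mathrm{dyn},t}\,\mathrm{d}x_{t-1}\,\mathrm{d}u_t - q_{x\theta,t}(x_t,\theta)\Big)\,\mathrm{d}x_t\,\mathrm{d}\theta \\
&\quad+ \int \Lambda_{\mathrm{trip}}(x_t,x_{t-1},u_t)\Big(\!\int q_{\mathrm{dyn},t}\,\mathrm{d}\theta - q_{\mathrm{trip},t}(x_t,x_{t-1},u_t)\Big)\,\mathrm{d}x_t\,\mathrm{d}x_{t-1}\,\mathrm{d}u_t,
\end{align*}
differentiate with respect to $q_{\mathrm{dyn},t}(x_t,x_{t-1},\theta,u_t)$, and set the derivative to zero. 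The relative-entropy term contributes $\log q_{\mathrm{dyn},t} - \log p(x_t\mid x_{t-1},\theta,u_t) + 1$, and the two constraint terms contribute $\Lambda_{x\theta}(x_t,\theta)$ and $\Lambda_{\mathrm{trip}}(x_t,x_{t-1},u_t)$ respectively, since each inner integral merely evaluates the multiplier on the relevant sub-tuple of $(x_t,x_{t-1},\theta,u_t)$. Absorbing the constant $+1$ into one multiplier gives \eqref{eq:qdyn-stationarity-min-eq}, and exponentiating gives \eqref{eq:qdyn-proportional-min}; normalization of $q_{\mathrm{dyn},t}$ is automatic once a projected belief such as $q_{x\theta,t}$ is normalized, and the two equations in \eqref{eq:qdyn-projections-min} fix the multipliers. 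I would close with the remark that the singleton-consistency constraints on $q_{\mathrm{dyn},t}$ in \eqref{eq:local-consistency} are already implied by the region-belief chain — $q_{x_t}=\int q_{x\theta,t}\,\mathrm{d}\theta$, $q_\theta=\int q_{x\theta,t}\,\mathrm{d}x_t$, $q_{x_{t-1}}=\int q_{\mathrm{pair},t}\,\mathrm{d}u_t$, $q_{u_t}=\int q_{\mathrm{pair},t}\,\mathrm{d}x_{t-1}$, together with $\int q_{\mathrm{trip},t}\,\mathrm{d}x_t = q_{\mathrm{pair},t}$ — so they inject no further multipliers into the $q_{\mathrm{dyn},t}$ stationarity equation, and the pair–trip relation is handled in the separate $q_{\mathrm{trip},t}$ variation.

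The one step that needs genuine care is the bookkeeping in the first paragraph: confirming that the entropic correction really is inert under the $q_{\mathrm{dyn},t}$ variation. This rests on the modeling decision that the region beliefs are first-class coordinates rather than abbreviations for marginals of $q_{\mathrm{dyn},t}$; granting that, the remainder is the textbook ``relative entropy $+$ affine constraints $\Rightarrow$ exponential-family tilt'' computation, with no existence or uniqueness obstruction beyond the usual interior-solution caveat. Note also the pleasant contrast with \autoref{thm:obs-degeneracy-aug}: here the $q_{\mathrm{dyn},t}$-block keeps its $\int q_{\mathrm{dyn},t}\log q_{\mathrm{dyn},t}$ term — it is not cancelled by any correction — so the block is strictly convex in $q_{\mathrm{dyn},t}$ and the stationary point in this coordinate is unique, unlike the flat observation block.
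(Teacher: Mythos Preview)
Your proposal is correct and matches the paper's (implicit) approach. In fact, the paper does not supply a separate proof for this lemma at all: the statement already sketches the argument (``the $q_{\mathrm{dyn},t}$-dependent part of the objective is the KL term \ldots\ form the partial Lagrangian \ldots\ then the stationarity is \ldots''), and the authors move directly to \autoref{lem:Lambda-xth-form}. Your write-up is therefore more detailed than the paper's treatment --- you explicitly justify why the entropic corrections are inert under the $q_{\mathrm{dyn},t}$ variation (because the region beliefs are independent coordinates coupled only through constraints), write out the Lagrangian, compute the functional derivative, and add the convexity remark contrasting with \autoref{thm:obs-degeneracy-aug}. All of this is sound and in the same spirit as the proofs of \autoref{lem:qy-stationarity-channel} and \autoref{lem:r-stationarity-channel}.
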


\subsection{Identification of the Lagrange multipliers}\label{subsec:multiplier-identification}

\begin{lemma}[Identification of the dynamics–side separator multiplier]\label{lem:Lambda-xth-form}
Fix \(t\). Assume the channel \(r_{y\mid x\theta,t}(y_t\mid x_t,\theta)\) is row–normalized and impose separator consistency
\[
\int q_{y,t}(y_t,x_t,\theta)\,\mathrm{d}y_t \;=\; q_{\mathrm{sep},t}(x_t,\theta)
\;=\; \int q_{\mathrm{dyn},t}(x_t,x_{t-1},\theta,u_t)\,\mathrm{d}x_{t-1}\,\mathrm{d}u_t.
\]
At any stationary point of the observation block, there exists a slice–constant \(C_t>0\) such that
\begin{equation}\label{eq:Lambda-xth-statement}
\exp\!\big\{-\Lambda_{x\theta}(x_t,\theta)\big\}
\;=\;
C_t\,
\frac{\displaystyle \int p(y_t\mid x_t,\theta)\,r_{y\mid x\theta,t}(y_t\mid x_t,\theta)\,e^{-\lambda_y(y_t)}\,\mathrm{d}y_t}
{\;q_{\mathrm{sep},t}(x_t,\theta)\;},
\end{equation}
where \(\lambda_y(\cdot)\) is the Lagrange multiplier enforcing
\(\int q_{y,t}(y_t,x_t,\theta)\,\mathrm{d}x_t\,\mathrm{d}\theta = q_{y_t}(y_t)\).
Equivalently,
\[
\Lambda_{x\theta}(x_t,\theta)
\;=\;
\log q_{\mathrm{sep},t}(x_t,\theta)
\;-\;\log\!\Big(\!\int p(y_t\mid x_t,\theta)\,r_{y\mid x\theta,t}(y_t\mid x_t,\theta)\,e^{-\lambda_y(y_t)}\,\mathrm{d}y_t\Big)
\;+\;c_t,
\]
with \(c_t=-\log C_t\) independent of \((x_t,\theta)\).
\end{lemma}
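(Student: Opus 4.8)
The plan is to combine the stationary form of the observation-factor belief from \autoref{lem:qy-stationarity-channel} with the separator consistency constraint, and then to pin down $\Lambda_{x\theta}$ — the multiplier that, on the dynamics side, enforces $\int q_{\mathrm{dyn},t}\,\mathrm{d}x_{t-1}\mathrm{d}u_t=q_{\mathrm{sep},t}$ — by invoking stationarity of the full Lagrangian with respect to the separator belief $q_{\mathrm{sep},t}$ itself. The only nonroutine ingredient is the sign/constant relation between the observation-side separator multiplier $\lambda_{x\theta}$ and the dynamics-side one $\Lambda_{x\theta}$.

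First I would recall from \autoref{lem:qy-stationarity-channel} that at a stationary point $q_{y,t}(y_t,x_t,\theta)\propto p(y_t\mid x_t,\theta)\,r_{y\mid x\theta,t}(y_t\mid x_t,\theta)\,e^{-\lambda_y(y_t)}\,e^{-\lambda_{x\theta}(x_t,\theta)}$, where $\lambda_y$ is the multiplier for the $y_t$-singleton constraint and $\lambda_{x\theta}$ the observation-block multiplier for $\int q_{y,t}\,\mathrm{d}y_t=q_{\mathrm{sep},t}$. Integrating this identity over $y_t$, the factor $e^{-\lambda_{x\theta}(x_t,\theta)}$ is $y_t$-independent and pulls out; using the separator consistency $\int q_{y,t}\,\mathrm{d}y_t=q_{\mathrm{sep},t}(x_t,\theta)$ then lets me solve
\[
e^{+\lambda_{x\theta}(x_t,\theta)}\;=\;Z_t^{-1}\,\frac{\displaystyle\int p(y_t\mid x_t,\theta)\,r_{y\mid x\theta,t}(y_t\mid x_t,\theta)\,e^{-\lambda_y(y_t)}\,\mathrm{d}y_t}{\,q_{\mathrm{sep},t}(x_t,\theta)\,},
\]
with $Z_t>0$ the (slice-constant) normalizer of $q_{y,t}$.

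Next I would observe that, after the $r$-channel reparameterization $H[q(y_t\mid x_t,\theta)]=\min_r \mathbb{E}_{q_{y,t}}[-\log r_{y\mid x\theta,t}]$, the separator belief $q_{\mathrm{sep},t}$ no longer occurs in the energy or entropy parts of the objective at all: it appears \emph{only} through the two projection constraints $\int q_{y,t}\,\mathrm{d}y_t=q_{\mathrm{sep},t}$ (multiplier $\lambda_{x\theta}$) and $\int q_{\mathrm{dyn},t}\,\mathrm{d}x_{t-1}\mathrm{d}u_t=q_{\mathrm{sep},t}$ (multiplier $\Lambda_{x\theta}$), plus its own redundant normalization. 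Hence the functional derivative of the Lagrangian with respect to $q_{\mathrm{sep},t}(x_t,\theta)$ is $-\lambda_{x\theta}(x_t,\theta)-\Lambda_{x\theta}(x_t,\theta)+\nu_t$ for a slice-constant $\nu_t$; setting it to zero gives $\Lambda_{x\theta}(x_t,\theta)=-\lambda_{x\theta}(x_t,\theta)+\nu_t$. Substituting the expression for $e^{+\lambda_{x\theta}}$ from the previous step and absorbing the slice-constants into a single $C_t=e^{-\nu_t}Z_t^{-1}>0$ yields exactly \eqref{eq:Lambda-xth-statement}; taking logarithms and writing $c_t=-\log C_t$ gives the ``equivalently'' display.

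The main obstacle is the multiplier bookkeeping: one must carefully distinguish the observation-side separator multiplier $\lambda_{x\theta}$ (from \autoref{lem:qy-stationarity-channel}) from the dynamics-side separator multiplier $\Lambda_{x\theta}$ (from \autoref{lem:qdyn-stationarity-min}), and justify rigorously that the $r$-reparameterization genuinely removes $q_{\mathrm{sep},t}$ from the objective, so that its stationarity condition collapses to $\lambda_{x\theta}+\Lambda_{x\theta}=\text{(slice-constant)}$. Once that relation is in hand, the $y_t$-integration and the identification of the normalizers are entirely mechanical.
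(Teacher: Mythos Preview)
Your proposal is correct and follows essentially the same route as the paper: invoke the stationary form of $q_{y,t}$ from \autoref{lem:qy-stationarity-channel}, integrate over $y_t$ to isolate $e^{\pm\lambda_{x\theta}}$ via separator consistency, and then vary the Lagrangian in $q_{\mathrm{sep},t}$ to obtain $\lambda_{x\theta}+\Lambda_{x\theta}=\text{const}$, which converts the observation-side multiplier into the dynamics-side one. The only cosmetic difference is that the paper omits the redundant normalization multiplier $\nu_t$ (so it gets $\lambda_{x\theta}+\Lambda_{x\theta}=0$ exactly and sets $C_t=\kappa_t$), whereas you carry $\nu_t$ and absorb it into $C_t=e^{-\nu_t}Z_t^{-1}$; both are equivalent up to the slice-constant already allowed in the statement.
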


\begin{proof}
From the observation–block stationarity (Lemma~\ref{lem:qy-stationarity-channel}), there exists a slice–constant
\(\kappa_t>0\) such that
\begin{equation}\label{eq:qy-sta}
q_{y,t}(y_t,x_t,\theta)
=\kappa_t\, p(y_t\mid x_t,\theta)\, r_{y\mid x\theta,t}(y_t\mid x_t,\theta)\,
e^{-\lambda_y(y_t)}\, e^{-\lambda_{x\theta}(x_t,\theta)}.
\end{equation}
Integrate \eqref{eq:qy-sta} over \(y_t\) and use the left separator equality to obtain
\[
q_{\mathrm{sep},t}(x_t,\theta)
=\kappa_t\, e^{-\lambda_{x\theta}(x_t,\theta)}
\underbrace{\int p(y_t\mid x_t,\theta)\, r_{y\mid x\theta,t}(y_t\mid x_t,\theta)\, e^{-\lambda_y(y_t)}\,\mathrm{d}y_t}_{=:~I_t(x_t,\theta)}.
\]
Solve for \(e^{-\lambda_{x\theta}(x_t,\theta)}\):
\[
e^{-\lambda_{x\theta}(x_t,\theta)}=\frac{q_{\mathrm{sep},t}(x_t,\theta)}{\kappa_t\, I_t(x_t,\theta)}.
\]
On the other hand, varying the separator \(q_{\mathrm{sep},t}\) in the Lagrangian for the two equalities gives
\(\lambda_{x\theta}(x_t,\theta)+\Lambda_{x\theta}(x_t,\theta)=0\), hence
\(e^{-\Lambda_{x\theta}(x_t,\theta)}=e^{\lambda_{x\theta}(x_t,\theta)}\).
Combining the two displays,
\[
e^{-\Lambda_{x\theta}(x_t,\theta)}
=e^{\lambda_{x\theta}(x_t,\theta)}
=\frac{\kappa_t\, I_t(x_t,\theta)}{q_{\mathrm{sep},t}(x_t,\theta)}
\;=\;
C_t\,
\frac{\displaystyle \int p(y_t\mid x_t,\theta)\,r_{y\mid x\theta,t}(y_t\mid x_t,\theta)\,e^{-\lambda_y(y_t)}\,\mathrm{d}y_t}
{\;q_{\mathrm{sep},t}(x_t,\theta)\;},
\]
with \(C_t=\kappa_t\) independent of \((x_t,\theta)\). Taking logs yields the equivalent additive form with
\(c_t=-\log C_t\).
\end{proof}

\begin{lemma}[Identification of the triplet multiplier from the dynamics-side entropic correction]\label{lem:Lambda-trip-form}
Fix \(t\). Let the region beliefs \(q_{\mathrm{trip},t}(x_t,x_{t-1},u_t)\) and \(q_{\mathrm{pair},t}(x_{t-1},u_t)\) satisfy the coupling constraint
\begin{equation}\label{eq:pair-trip-coupling}
\int q_{\mathrm{trip},t}(x_t,x_{t-1},u_t)\,\mathrm{d}x_t \;=\; q_{\mathrm{pair},t}(x_{t-1},u_t),
\end{equation}
and let the projection from the dynamics factor be enforced via the multiplier
\(\Lambda_{\mathrm{trip}}(x_t,x_{t-1},u_t)\) on
\(\int q_{\mathrm{dyn},t}\,\mathrm{d}\theta - q_{\mathrm{trip},t}=0\).
Suppose the objective contains the dynamics-side entropic correction
\(+\,H[q_{\mathrm{pair},t}] - H[q_{\mathrm{trip},t}]\).
Then, at any stationary point, there exists a slice-constant \(C_t>0\) such that
\begin{equation}\label{eq:Lambda-trip-statement}
\exp\!\big\{-\Lambda_{\mathrm{trip}}(x_t,x_{t-1},u_t)\big\}
\;=\;
C_t\;\frac{q_{\mathrm{pair},t}(x_{t-1},u_t)}{q_{\mathrm{trip},t}(x_t,x_{t-1},u_t)}\,,
\end{equation}
equivalently,
\[
\Lambda_{\mathrm{trip}}(x_t,x_{t-1},u_t)
\;=\; \log q_{\mathrm{trip},t}(x_t,x_{t-1},u_t)\;-\;\log q_{\mathrm{pair},t}(x_{t-1},u_t)\;+\;c_t,
\]
with \(c_t=-\log C_t\) independent of \((x_t,x_{t-1},u_t)\).
\end{lemma}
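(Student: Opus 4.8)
The plan is to run the same multiplier-identification computation used for $\Lambda_{x\theta}$ in Lemma~\ref{lem:Lambda-xth-form}, now localized to the pair--triplet block. First I would isolate the part of the augmented Bethe objective that actually depends on the two region beliefs $q_{\mathrm{trip},t}$ and $q_{\mathrm{pair},t}$. By construction of the region-extended coordinates, $q_{\mathrm{trip},t}$ enters the objective only through (i) the dynamics-side entropic correction $+\ent{q_{\mathrm{pair},t}}-\ent{q_{\mathrm{trip},t}}$, (ii) the projection $\int q_{\mathrm{dyn},t}\,\mathrm{d}\theta-q_{\mathrm{trip},t}=0$ carrying multiplier $\Lambda_{\mathrm{trip}}(x_t,x_{t-1},u_t)$, and (iii) the pair--trip coupling \eqref{eq:pair-trip-coupling}; while $q_{\mathrm{pair},t}$ enters only through (i) and (iii). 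All KL terms, the entire observation block, and every singleton/unary belief are constant in these two coordinates (the KL of $q_{\mathrm{dyn},t}$ couples to $q_{\mathrm{trip},t}$ only through the Lagrange term (ii)), so it suffices to work with the partial Lagrangian
\[
\mathcal{L}=\ent{q_{\mathrm{pair},t}}-\ent{q_{\mathrm{trip},t}}+\int\Lambda_{\mathrm{trip}}\Big(\!\int q_{\mathrm{dyn},t}\,\mathrm{d}\theta-q_{\mathrm{trip},t}\Big)+\int\mu_t(x_{t-1},u_t)\Big(\!\int q_{\mathrm{trip},t}\,\mathrm{d}x_t-q_{\mathrm{pair},t}\Big)+(\text{normalizations}),
\]
with $\mu_t$ the multiplier of the coupling constraint.

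Next I would take the two functional derivatives. Varying with respect to $q_{\mathrm{pair},t}(x_{t-1},u_t)$: $+\ent{q_{\mathrm{pair},t}}$ contributes $-\log q_{\mathrm{pair},t}-1$ and the coupling contributes $-\mu_t$, so stationarity gives $\mu_t(x_{t-1},u_t)=-\log q_{\mathrm{pair},t}(x_{t-1},u_t)+\text{(slice constant)}$. Varying with respect to $q_{\mathrm{trip},t}(x_t,x_{t-1},u_t)$: $-\ent{q_{\mathrm{trip},t}}$ contributes $+\log q_{\mathrm{trip},t}+1$, the projection (ii) contributes $-\Lambda_{\mathrm{trip}}$, and the coupling contributes $+\mu_t$, so $\log q_{\mathrm{trip},t}-\Lambda_{\mathrm{trip}}+\mu_t=\text{(slice constant)}$. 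Substituting the first relation into the second cancels the $q_{\mathrm{pair},t}$-independent pieces and yields $\Lambda_{\mathrm{trip}}(x_t,x_{t-1},u_t)=\log q_{\mathrm{trip},t}(x_t,x_{t-1},u_t)-\log q_{\mathrm{pair},t}(x_{t-1},u_t)+c_t$, with $c_t$ the aggregate of the $+1$'s and the normalization multipliers, hence independent of $(x_t,x_{t-1},u_t)$. Exponentiating $-\Lambda_{\mathrm{trip}}$ gives precisely \eqref{eq:Lambda-trip-statement} with $C_t=e^{-c_t}>0$; positivity follows since $q_{\mathrm{trip},t},q_{\mathrm{pair},t}$ are interior and the display makes sense wherever $q_{\mathrm{trip},t}>0$.

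The main thing to get right is the bookkeeping of signs and constants. One has to fix the sign convention of $\Lambda_{\mathrm{trip}}$ so that it agrees with the one already used in Lemma~\ref{lem:qdyn-stationarity-min}, where $q_{\mathrm{dyn},t}\propto p(x_t\mid x_{t-1},\theta,u_t)\exp\{-\Lambda_{x\theta}\}\exp\{-\Lambda_{\mathrm{trip}}\}$, so that the identity derived here plugs back consistently. A second point to verify is that the coupling multiplier $\mu_t$ genuinely depends only on $(x_{t-1},u_t)$, which is what lets it be absorbed while preserving the full $(x_t,x_{t-1},u_t)$-dependence of the final formula; since $\mu_t$ comes out proportional to $\log q_{\mathrm{pair},t}(x_{t-1},u_t)$, this is immediate. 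Finally I would note that under the hypotheses of the lemma only \eqref{eq:pair-trip-coupling} (and the projection from $q_{\mathrm{dyn},t}$) is imposed on this block, so no further region projections of $q_{\mathrm{pair},t}$ onto singletons enter the variation; everything else is the elementary stationarity computation already carried out for $\Lambda_{x\theta}$ in Lemma~\ref{lem:Lambda-xth-form}.
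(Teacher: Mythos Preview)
Your proposal is correct and follows essentially the same approach as the paper: isolate the partial Lagrangian in $(q_{\mathrm{trip},t},q_{\mathrm{pair},t})$ with the entropic correction, the projection multiplier $\Lambda_{\mathrm{trip}}$, and a coupling multiplier (the paper calls it $\Xi_{\mathrm{pair}}$, you call it $\mu_t$), take the two functional derivatives, and eliminate the coupling multiplier to obtain $\Lambda_{\mathrm{trip}}=\log q_{\mathrm{trip},t}-\log q_{\mathrm{pair},t}+c_t$. The sign conventions, the absorption of $+1$'s and normalizers into $c_t$, and the remark about consistency with Lemma~\ref{lem:qdyn-stationarity-min} all match the paper's own bookkeeping.
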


\begin{proof}
Consider the part of the Lagrangian involving \(q_{\mathrm{trip},t}\) and \(q_{\mathrm{pair},t}\):
\begin{align*}
\mathcal{L}
&=\underbrace{\int q_{\mathrm{trip},t}\log q_{\mathrm{trip},t}}_{\text{from }-H[q_{\mathrm{trip},t}]}
\;-\;\underbrace{\int q_{\mathrm{pair},t}\log q_{\mathrm{pair},t}}_{\text{from }+H[q_{\mathrm{pair},t}]} \\
&\quad+\;\iint\!\Xi_{\mathrm{pair}}(x_{t-1},u_t)\!\left(\int q_{\mathrm{trip},t}\,\mathrm{d}x_t - q_{\mathrm{pair},t}\right)\mathrm{d}x_{t-1}\mathrm{d}u_t \\
&\quad-\;\int \Lambda_{\mathrm{trip}}\, q_{\mathrm{trip},t}
\end{align*}
where the last term comes from the projection \(\int q_{\mathrm{dyn},t}\mathrm{d}\theta - q_{\mathrm{trip},t}=0\).
Varying w.r.t.\ \(q_{\mathrm{trip},t}\) gives
\[
\frac{\delta \mathcal{L}}{\delta q_{\mathrm{trip},t}}
:\quad \log q_{\mathrm{trip},t}+1\;+\;\Xi_{\mathrm{pair}}(x_{t-1},u_t)\;-\;\Lambda_{\mathrm{trip}}(x_t,x_{t-1},u_t)=0.
\]
Varying w.r.t.\ \(q_{\mathrm{pair},t}\) yields
\[
\frac{\delta \mathcal{L}}{\delta q_{\mathrm{pair},t}}
:\quad -\big(\log q_{\mathrm{pair},t}+1\big)\;-\;\Xi_{\mathrm{pair}}(x_{t-1},u_t)=0
\;\;\Rightarrow\;\;
\Xi_{\mathrm{pair}}(x_{t-1},u_t)= -\big(\log q_{\mathrm{pair},t}(x_{t-1},u_t)+1\big).
\]
Eliminating \(\Xi_{\mathrm{pair}}\) in the first equation gives
\[
\Lambda_{\mathrm{trip}}(x_t,x_{t-1},u_t)
= \log q_{\mathrm{trip},t}(x_t,x_{t-1},u_t)\;-\;\log q_{\mathrm{pair},t}(x_{t-1},u_t)\;+\;c_t,
\]
where the slice-constant \(c_t\) (absorbing the \(+1\) terms and any normalization) is independent of \((x_t,x_{t-1},u_t)\).
Exponentiating yields \eqref{eq:Lambda-trip-statement} with \(C_t=e^{-c_t}\).
\end{proof}
\end{document}